%% file: main.tex
\documentclass{applemlr}

\input{apple_preamble}

\usepackage{array}
\usepackage{nicefrac}
\usepackage{algorithm}
\usepackage{algorithmic}

\theoremstyle{plain}
\newtheorem{theorem}{Theorem}[section]
\newtheorem{proposition}[theorem]{Proposition}

\theoremstyle{definition}
\newtheorem{definition}[theorem]{Definition}
\newtheorem{assumption}[theorem]{Assumption}
\newtheorem{remark}[theorem]{Remark}

\newcommand{\method}{\textsc{DLR-Lock}}
\newcommand{\arch}{DLR-Net}

\title{Locking Pretrained Weights via \\ Deep Low-Rank Residual Distillation}

\author[*]{Keitaro Sakamoto}
\author{Pierre Ablin}
\author{Federico Danieli}
\author{Marco Cuturi}

\affiliation{Apple}

\abstract{
The quality of open-weight language models has dramatically improved in recent years.
Sharing weights greatly facilitates model adoption by enabling their use across diverse hardware and software platforms.
They also allow for more open research and testing, to the extent that users can use them as checkpoints, fine-tune them according to their needs, and potentially redistribute them.
In some cases, however, concerns on modifying these weights towards unauthorized uses may outweigh the pros of giving users such a freedom.
Defending against such adaptation is non-trivial: since an adaptive attacker can observe all weights and architectures by definition, they can reverse simple structural defenses, and use optimization to defeat the simplest locking mechanisms.
In this work, we exploit the inference--training asymmetry of automatic differentiation as a novel defense axis.
We propose \method{}, a method where the purveyor of the model purposely replaces each pretrained MLP in their model with a deep low-rank residual network (\arch{}) of comparable parameter count, forcing activation memory that grows linearly with depth during backpropagation.
\arch{}s are efficiently trained via module-wise distillation.
We show that, beyond this memory overhead, \method{} results in architectural mismatches that complicate the optimization landscape of standard fine-tuning, and a backward pass that incurs disproportionately more overhead than the forward pass.
Our defense succeeds in withstanding adaptive attackers with full knowledge of the defense strategy while preserving the original model's capabilities.
Experiments on LLM validate these claims.
}

\metadata[Correspondence]{\sffamily KS: \url{sakakei-1999@g.ecc.u-tokyo.ac.jp}; PA: \url{p_ablin@apple.com}; FD: \url{f_danieli@apple.com}; MC: \url{m_cuturi@apple.com}}
\metadata[Contributions]{\sffamily Work done while KS was an intern at Apple.}
\date{\sffamily\today}

\begin{document}

\maketitle

\section{Introduction}
\label{sec:intro}

Pretrained large language models represent a substantial investment of compute, data curation, and engineering effort.
When released as open weights, the model provider loses control over how these weights are used: any party can adapt the model to new domains or remove usage restrictions, often at a fraction of the original training cost~\citep{kapoor2024societal,bommasani2024considerations,casper2026open}.
Open-weight licenses provide legal protection against such misuse, but no technical enforcement.
These concerns motivate a fundamental scientific question: \emph{can a neural network be locked, so that adaptation is inherently more expensive than adapting the original model, while retaining its inference time performance?}
This pressing question is far from straightforward:
simple structural defenses that exploit weight symmetries can be trivially reverted by an adaptive attacker who inspects the model.
One might hope to use optimization to embed the defense more deeply into the loss landscape, but this strategy is inherently self-defeating: the defender must use optimization to find a solution that resists optimization, yet the same gradient information available to the defender is equally available to the attacker.
Recent model locking methods~\citep{rosati2025locking,wang2026towards,rosati2026limits} face this fundamental difficulty.

\begin{figure}[t]
  \centering
  \includegraphics[width=0.95\linewidth]{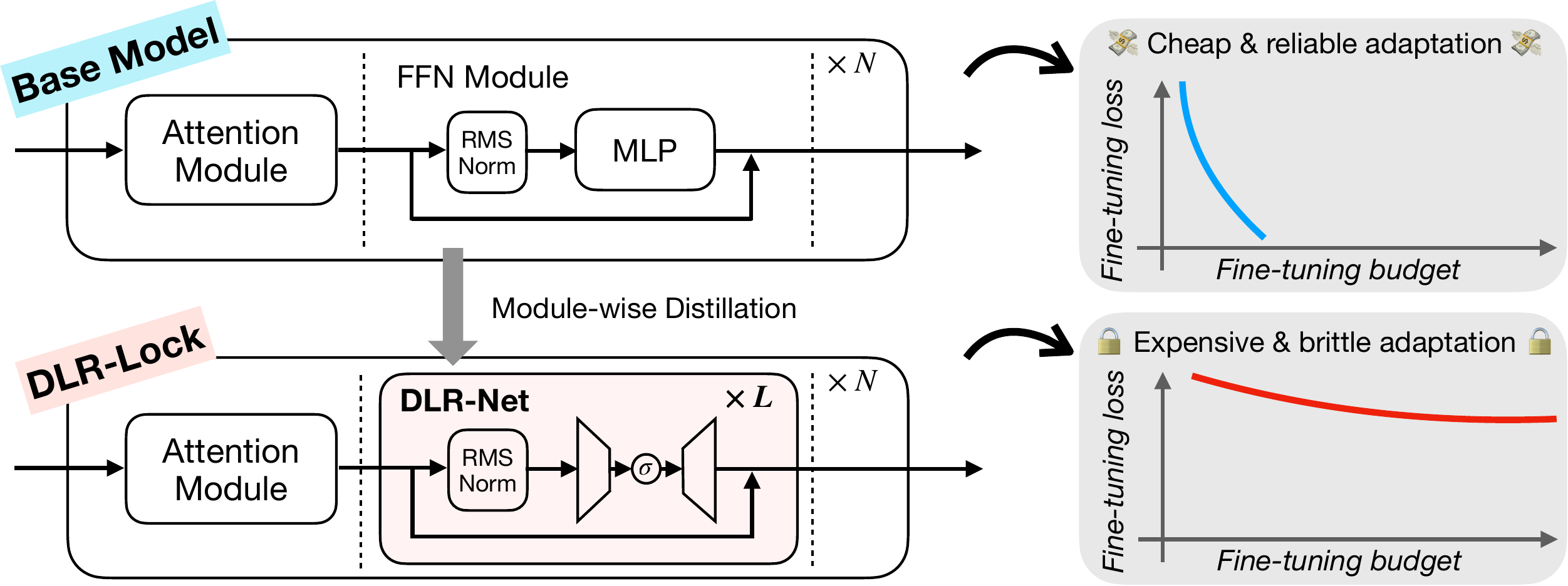}
  \caption{Overview of \method{}.
  Each MLP in the original model (top) is replaced with a functionally equivalent deep low-rank residual network (\arch{}, bottom) via module-wise distillation.
  The replacement introduces three locking mechanisms: (1)~activation memory growth, (2)~optimization instability from architectural mismatch, and (3)~disproportionate backward-pass overhead.
  }
  \label{fig:architecture}
\end{figure}

In this work, we exploit a different defense axis: the inference--training asymmetry of automatic differentiation~\citep{griewank2008evaluating}.
Inference requires only a forward pass through the model, discarding all intermediate activations; training, by contrast, must store those intermediates for backpropagation.
To exploit this weakness, we propose to replace each pretrained MLP in a large network with a deep low-rank residual network (\arch{}) that matches the original input--output mapping via distillation (vector regression) with comparable parameter count.
The replacement introduces three locking mechanisms:
(1)~activation memory that grows linearly with depth;
(2)~architectural mismatch between the deep residual networks and the original MLPs destabilizes standard fine-tuning recipes; and
(3)~the backward pass is disproportionately slowed compared to the forward pass due to heavier memory access and more complex computation.
The locked model thus preserves inference quality while imposing substantial additional cost on any party attempting to adapt it.
Our locking mechanism is \emph{fine-tuning-domain agnostic}: it prevents fine-tuning the model on any target domain, not only on potentially harmful target domains \citep{huang2024vaccine,tamirisa2025tamperresistant}.
On the one hand, users cannot fine-tune the model for genuinely inoffensive applications.
On the other hand, we are guaranteed to be locked against any malicious fine-tuning dataset.
Figure~\ref{fig:architecture} illustrates the overall idea; related work is discussed in Appendix~\ref{app:related}. Our contributions are:

\begin{itemize}[leftmargin=*]
\item We formalize the problem of \emph{model locking} as adaptation cost amplification and establish an adaptive threat model grounded in Kerckhoffs's principle, which requires that security rely on structural properties rather than secrecy of the defense mechanism.
\item We categorize existing locking approaches as symmetry-based, barrier-based, and penalty-based, and identify inherent limitations of each through analysis and controlled experiments.
\item We show that the inference--training asymmetry of automatic differentiation provides a novel, optimizer-agnostic defense axis, and propose \method{}, a deep low-rank residual locking with a two-phase distillation pipeline locking modules without quality loss.
\item We validate the approach on the Qwen3-0.6B model~\citep{yang2025qwen3}, demonstrating that the locked models preserve the original capabilities while imposing additional cost on fine-tuning: beyond the increased activation memory expected by design, architectural mismatch with the original MLPs destabilizes standard fine-tuning recipes, and the backward pass is disproportionately slowed compared to the forward pass.
\end{itemize}

\section{Model Locking: Formulation and Challenges}
\label{sec:prelim}

Modern large language models are built from a stack of $N$ transformer
blocks~\citep{vaswani2017attention}.  Each block consists of a multi-head
self-attention module $\mathrm{Attn}(\cdot)$ followed by a feed-forward
network (FFN) module, with a residual connection and
RMSNorm~\citep{zhang2019root} applied before each module, following
the pre-norm convention:
\begin{align}
  \mZ &= \mX + \mathrm{Attn}\bigl(\mathrm{RMSNorm}(\mX)\bigr), \quad
  \mX' = \mZ + \mathrm{FFN}\bigl(\mathrm{RMSNorm}(\mZ)\bigr).
\end{align}
Here $\mX \in \R^{n \times d}$ is a sequence of $n$ token representations of
dimension~$d$, $\mZ$ and $\mX'$ are intermediate and output sequences of the
same shape, and both $\mathrm{FFN}$ and $\mathrm{RMSNorm}$ act row-wise,
i.e., independently on each token.  In recent architectures such as
LLaMA~\citep{grattafiori2024llama} and Qwen~\citep{yang2025qwen3}, the FFN is
a SwiGLU MLP~\citep{shazeer2020glu}:
\[
  \mathrm{FFN}(\vx) = \mW_{\mathrm{down}} \bigl((\mW_{\mathrm{up}} \vx) \odot
  \sigma(\mW_{\mathrm{gate}} \vx)\bigr),
\]
where $\odot$ denotes element-wise multiplication,
$\sigma(z) = z \cdot \mathrm{sigmoid}(z)$ is the SiLU
activation~\citep{elfwing2018sigmoid},
$\mW_{\mathrm{gate}}, \mW_{\mathrm{up}} \in \R^{d_{\mathrm{ff}} \times d}$,
and $\mW_{\mathrm{down}} \in \R^{d \times d_{\mathrm{ff}}}$.
Our method replaces each FFN residual sub-block with a functionally equivalent architecture, leaving all other components unchanged.

\subsection{Problem formulation}
\label{sec:formulation}

The term \emph{model locking} was introduced
by~\citet{rosati2025locking} to describe making pretrained weights resistant to
fine-tuning; a related concept of \emph{non-fine-tunability} was independently
proposed by~\citet{wang2026towards}.  We formalize this setting below.

\begin{definition}[Locking Mechanism]
\label{def:defense}
Let $f(\cdot;\theta_0)$ be a model obtained by standard pretraining.  A
\emph{locking mechanism} $\gM$ with private information~$s$ produces a
locked model $g(\cdot;\theta') = \gM(f, \theta_0, s)$ that is expected to be harder
to fine-tune than~$f(\cdot; \theta_0)$.
\end{definition}

\begin{assumption}[Attacker Capabilities]
\label{asm:kerckhoffs}
The attacker has: (i)~full access to the published weights~$\theta'$, but not to $s$;
(ii)~knowledge of the locking algorithm~$\gM$;
(iii)~a compute budget $C_{\mathrm{attack}} \ll C_{\mathrm{pretrain}}$.
\end{assumption}

\noindent Condition~(ii) follows Kerckhoffs's
principle~\citep{kerckhoffs1883cryptographie}: the defense
must remain effective even when the attacker knows exactly how it was
constructed. The defender's advantage comes
from structural properties of~$g$, not secrecy of~$\gM$.
Condition~(iii) is motivated by the observation that the attacker can always
query~$g$ on arbitrary inputs and train a fresh model from the resulting
logits, even in the black-box setting.
Since this logit
distillation cost is of the order $C_{\mathrm{pretrain}}$~\citep{busbridge2025distillation}, no defense can
withstand an attacker whose budget exceeds this threshold.

The attacker can employ various strategies to
adapt the locked model~$g$ to a new, potentially harmful, task~$\gT$ under its budget constraint: fine-tuning the
entire model, distilling individual modules, applying analytical transforms
to preserve functional equivalence, or any combination thereof.  Let~$\mathfrak{A}$ denote the set of all feasible attack
strategies available to the attacker.  Each strategy $\gA \in \mathfrak{A}$
incurs a cost $C(\gA)$ and yields a model $\gA(g)$ with performance
$\mathrm{perf}_{\gT}(\gA(g))$ on task~$\gT$.  The \emph{adaptation cost} for
reaching a target performance level~$p$ is the minimum cost over all strategies
that achieve it:
\begin{equation}
\label{eq:adapt_cost}
  C_{\gT}(g \to p) \;=\; \min_{\gA \in \mathfrak{A}}
  \; C(\gA) \quad \text{s.t.} \quad
  \mathrm{perf}_{\gT}\bigl(\gA(g)\bigr) \ge p.
\end{equation}
We can now state the two requirements for model locking.

\begin{definition}[Model Locking]
\label{def:locking}
A defense mechanism $\gM$ achieves \emph{model locking} with locking
factor $\kappa > 1$ and tolerance $\varepsilon > 0$ if the published model
$g = \gM(f, \theta_0, s)$ satisfies:
\begin{enumerate}
\item[\emph{(i)}] \textbf{Utility preservation.}
  $\gL(g) < \gL(f) + \varepsilon$ for every downstream task loss $\gL$.
\item[\emph{(ii)}] \textbf{Adaptation cost amplification.}
  For any downstream task~$\gT$ and target performance level~$p$,
  \[
    C_{\gT}(g \to p) \;\ge\; \kappa \cdot C_{\gT}(f \to p),
  \]
  where $C_{\gT}(\cdot \to p)$ is measured in wall-clock time, FLOPs, or GPU-hours.
\end{enumerate}
\end{definition}

Condition~\emph{(i)} requires that the locked model retain general capabilities, not merely specific benchmark scores.
Definition~\ref{def:locking} places no constraint on the locking cost $C_{\mathrm{lock}}$, since the defender can afford a one-time cost that is amortized over all deployments.

\subsection{Why model locking is fundamentally difficult}
\label{sec:difficult}
\begin{figure}[t]
  \centering
  \includegraphics[width=\linewidth]{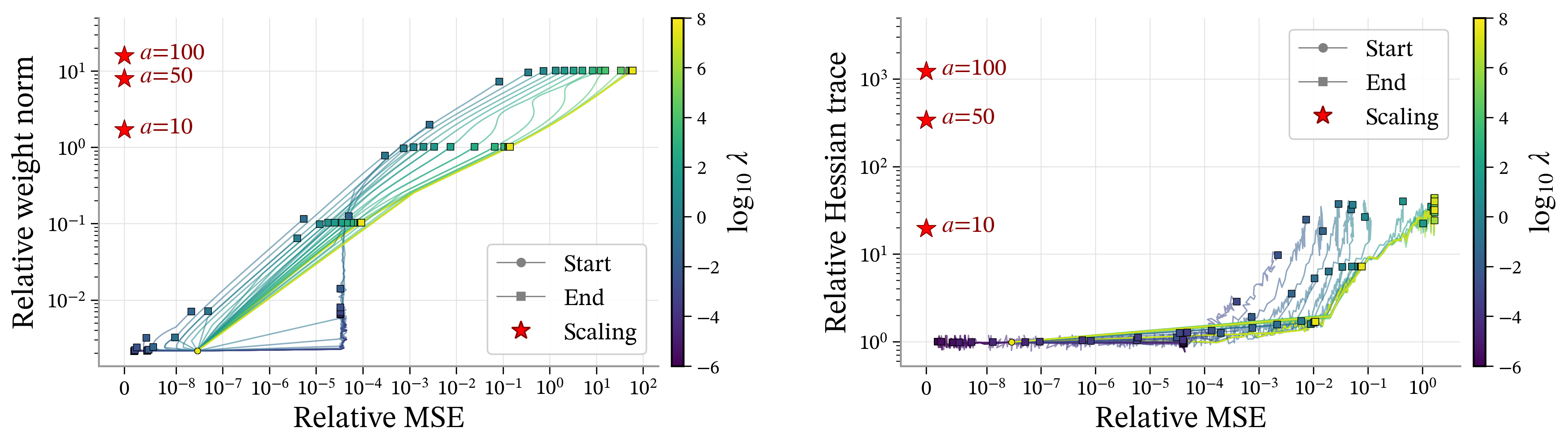}
  \caption{Locking solutions exist but gradient descent cannot reach them.
  We optimize $\min_{\Delta\theta} \mathrm{MSE} - \lambda\,\Omega$ on a two-layer ReLU MLP trained on MNIST, sweeping over 29~values of $\lambda$ and three learning rates with SGD and gradient norm clipping.
  Each curve is one trajectory; color encodes $\log_{10}\lambda$.
  Red stars: weight-symmetry solutions $(a^{-1}\mW_1,\, a\mW_2)$.
  \textbf{Left}: $\Omega = \|\Delta\theta\|^2$; $y$-axis is $\|\Delta\theta\|/\|\theta_0\|$.
  \textbf{Right}: $\Omega = \mathrm{Tr}(H_{\mathrm{CE}})$; $y$-axis is relative Hessian trace $\exp\,\mathbb{E}_x[\log\,\mathrm{Tr}(H_i(\theta))/\mathrm{Tr}(H_i(\theta_0))]$ ($H_i$: per-sample CE Hessian).
  $x$-axis (both): relative MSE, $\exp\,\mathbb{E}_x[\log\,\|f(\theta,x){-}f(\theta_0,x)\|^2/\|f(\theta_0,x)\|^2]$.
  }
  \label{fig:specdef}
\end{figure}

Before introducing our method, let us consider what approaches to model locking are available and why they face fundamental difficulties.
We provide a detailed comparison with existing methods in Appendix~\ref{app:related}, in light of the discussion below.
A natural starting point for model locking is to exploit \emph{weight symmetries}: transformations of the parameters that preserve the network's input--output mapping while potentially disrupting optimization~\citep{godfrey2022symmetries,zhao2026symmetry}.

\paragraph{Example 1: activation homogeneity.}
The ReLU activation is positively homogeneous, satisfying $\mathrm{ReLU}(a\vz) = a\,\mathrm{ReLU}(\vz)$ for any $a > 0$.
For any network layer of the form $\mW_2\,\mathrm{ReLU}(\mW_1 \vx)$, this yields the identity $(a\mW_2)\,\mathrm{ReLU}(a^{-1}\mW_1 \vx)$.
The reparameterization preserves the function exactly, but creates an imbalance:
the output becomes highly sensitive to small perturbations of
$a^{-1}\mW_1$, since these are amplified by $a\mW_2$.
This slows the optimization, but the defense is trivially broken: the attacker can inspect per-layer weight norms, detect the imbalance, and rebalance them.

\paragraph{Example 2: invertible matrix insertion.}
Between two consecutive linear layers, one can insert an invertible matrix~$\mA$ and its inverse: $(\mW_2 \mA^{-1})(\mA \mW_1)$.
In a transformer, this applies to consecutive linear maps such as the key--query or value--output projections, where an adversarial choice of $\mA$ can significantly disturb the individual factors.
A concrete example of matrix factorization is given in Appendix~\ref{app:scale_symmetry}.
However, even in this case, the adaptive attacker can compute the singular value decomposition (SVD) of the product $(\mW_2 \mA^{-1})(\mA \mW_1) = \mW_2\mW_1 = \mU \Sigma \mV^\top$ and reparameterize as $(\mU \Sigma^{1/2})(\Sigma^{1/2} \mV^\top)$, a canonical factorization entirely independent of the defender's choice of~$\mA$.

\paragraph{From symmetry-based to optimization-based approaches.}
These examples illustrate a general difficulty: known symmetries are limited to simple patterns (scaling, permutation, invertible linear transforms) that an informed attacker can detect and reverse, which is negligible additional cost in terms of~\eqref{eq:adapt_cost}.
This motivates \emph{optimization-based} approaches, where the defense is embedded through training and the configuration serves as secret information---even with full knowledge, exact unlearning requires solving a hard optimization problem.
We discuss two possible families below.

\paragraph{(i) Barrier-based locking: disturbing the input Jacobian.}
The idea is to insert a barrier module~$B$ between layers whose input Jacobian $\partial B(\vh) / \partial \vh$ causes gradients to explode or vanish, making fine-tuning difficult.
Two constraints arise: the barrier must be inseparable from the model, and its form must be learned rather than hand-crafted, since an adaptive attacker can otherwise remove or reverse it.
Replacing entire modules is a natural approach---replacing individual activations would be hand-crafted and removable---but standard distillation produces smooth approximations, so achieving pathological Jacobians without affecting downstream layers remains difficult.

\emph{Limitation:} well-hidden barriers that sufficiently block gradient flow also make it more difficult for the defender to maintain model quality, as required by condition~\emph{(i)} of Definition~\ref{def:locking}.

\paragraph{(ii) Penalty-based locking: optimizing for ill-conditioning.}
An alternative approach is to directly search for ill-conditioned solutions via optimization, rather than relying on known symmetries.
One can formulate this as a constrained optimization problem:
\begin{equation}\label{eq:lock-obj}
  \max_{\theta}\;\Omega(\theta) \quad \text{s.t.} \quad
  \mathcal{L}(\theta) < \mathcal{L}(\theta_0) + \varepsilon,
\end{equation}
where $\Omega$ measures difficulty of fine-tuning.
However, this formulation is self-defeating:

\emph{Limitation}: the defender seeks to make optimization hard for the
attacker, yet must itself rely on optimization to find such a solution, caught in a self-made trap.

To confirm this, we test a simplified instance of~\eqref{eq:lock-obj} on a two-layer ReLU MLP trained on MNIST~\citep{lecun2010mnist}.
We optimize $\min_{\Delta\theta} \operatorname{MSE}(f_{\theta_0}, f_{\theta_0 + \Delta\theta}) - \lambda\,\Omega(\theta_0+\Delta\theta)$, where the MSE term preserves the full input--output mapping to maintain the original performance, sweeping over $\lambda$ and learning rates with two proxy penalties: $\Omega = \|\Delta\theta\|_2^2$ (distance from initialization) and $\Omega = \mathrm{Tr}(\nabla^2_\theta \mathcal{L}_{\mathrm{CE}})$ (sharpness via Hutchinson trace~\citep{Hutchinson01011990}).
Neither directly measures fine-tuning difficulty; if gradient descent cannot achieve even these simpler objectives under the constraint, the full problem~\eqref{eq:lock-obj} is unlikely to be easier.
As shown in Figure~\ref{fig:specdef}, no trajectory reaches the high-$\Omega$ corner while maintaining the original function.
Yet such solutions
\emph{do} exist: the weight symmetry in Example~1 preserves the function exactly while increasing both quantities.
Gradient descent simply cannot reach them from the pretrained initialization (see also Appendix~\ref{app:hessian_scopes}).

\section{\method{}: Deep Low-Rank Residual Locking}
\label{sec:method}

The approaches discussed in Section~\ref{sec:difficult} all attempt to modify the loss landscape or gradient flow.
We take a different route: we replace each SwiGLU MLP with a \emph{\arch{}} that computes the same function but is structurally expensive to differentiate.
The key observation is that inference and training have fundamentally different memory requirements: users who run the model are unaffected, while those who update its weights face additional cost.
\begin{itemize}
  \item \textbf{Inference} evaluates the model in a single forward pass, discarding intermediates.
  \item \textbf{Training} must store all intermediate activations for backpropagation.
\end{itemize}
This is not a property of the loss landscape or the optimizer but a structural property of automatic differentiation.
An architecture that is cheap to evaluate but expensive to differentiate creates an inherent, optimizer-agnostic defense---one that does not depend on finding particular weights.

While the defender replaces each MLP independently without touching other components, the attacker optimizes all parameters jointly through $N \times L$ layers.
Beyond the memory cost, two additional optimization challenges may arise: the effective depth may cause training instability, and standard fine-tuning recipes developed for the original architecture are unlikely to transfer, since the deep residual MLPs differ substantially from the MLP blocks they replace.

\subsection{Deep low-rank residual network}
\label{sec:architecture}

A practical defense must not disadvantage legitimate users: the replacement must (i)~match the original module's input--output behavior and (ii)~not significantly increase the total parameter count or inference cost.
We use \emph{low-rank} weight matrices in each residual block: the bottleneck rank~$r \ll d$ keeps the per-layer parameter count small, allowing the total budget to match (or only slightly exceed) that of the original SwiGLU MLP while distributing the parameters across many layers.

We replace the MLP residual block $\vx' = \vz + \mathrm{FFN}(\mathrm{RMSNorm}(\vz))$ with hidden dimension~$d$ and intermediate dimension~$d_{\mathrm{ff}}$ by a depth-$L$ low-rank residual network:
\begin{equation}
\label{eq:arch}
  \vh_0 = \vz, \quad
  \vh_{i+1} = \vh_i + \alpha_i \mU_i\, \sigma\,\!\bigl(
    \mV_i\, \mathrm{RMSNorm}(\vh_i)\bigr), \quad
  \vx' = \vh_L,
\end{equation}
where $\sigma$ is a pointwise nonlinearity (we use SiLU for consistency with
the SwiGLU activation),
$\mV_i \in \R^{r \times d}$ projects to the bottleneck,
$\mU_i \in \R^{d \times r}$ projects back, and $\alpha_i$ is a learnable ReZero~\citep{bachlechner2021rezero} scalar.
Each layer costs $2dr$ parameters (plus normalization parameters), so at a fixed budget~$P$ the depth is $L = \lfloor P / (2dr + c_{\mathrm{norm}}) \rfloor$.
Smaller rank yields a deeper network at the same parameter cost.
RMSNorm pre-normalization is essential for stable training at $L > 100$.

These per-layer costs accumulate across the full network.
We quantify the cost amplification for the simplest attack strategy; stronger attacks are analyzed in Section~\ref{sec:attacks}:

\begin{proposition}[Cost amplification under naive fine-tuning]
\label{prop:kappa}
Consider an attacker who uses naive fine-tuning
$\mathfrak{A} = \{\gA_{\mathrm{ft}}\}$, i.e., gradient descent with full backpropagation through all layers, and assume the number of tokens required to reach target performance~$p$ on task~$\gT$ is the same for $f$ and~$g$.
Then the GPU-hour cost of fine-tuning the locked model satisfies $\kappa \ge \Theta(d/r)$ when MLP activations dominate the memory budget.
\end{proposition}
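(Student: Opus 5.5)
The plan is to count the activation memory that backpropagation must store, per token, in the original SwiGLU block and in the \arch{} of~\eqref{eq:arch}, and then turn the memory ratio into a GPU-hour ratio. Because the token count to reach $p$ is assumed identical for $f$ and $g$, $\kappa$ equals the ratio of per-token training costs. The comparison is made at equal parameter budget, $P \approx 3 d\, d_{\mathrm{ff}}$ for SwiGLU, which gives $L \approx P/(2dr) = \Theta(d_{\mathrm{ff}}/r)$. Since $d_{\mathrm{ff}} = \Theta(d)$ in the architectures considered (e.g.\ $d_{\mathrm{ff}} \approx 3d$ in Qwen3), this is $L = \Theta(d/r)$. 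Throughout we ignore the $c_{\mathrm{norm}} = O(d)$ term, which is lower order when $r \ge 1$ is compared with $d$.

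\textbf{Memory accounting.} For the original block, backpropagation stores $\mathrm{RMSNorm}(\vz)$ ($d$ entries) and the pre- and post-activation gate/up intermediates ($O(d_{\mathrm{ff}})$ entries). That is $\Theta(d + d_{\mathrm{ff}}) = \Theta(d)$ per token. For the \arch{}, each layer $i$ must keep:
\begin{itemize}
\item $\vh_i$, or equivalently its normalized version, because $\partial/\partial \mV_i$ needs the input to $\mV_i$ and the RMSNorm backward needs $\vh_i$: $\Theta(d)$ entries;
\item the bottleneck pre-activation $\mV_i\,\mathrm{RMSNorm}(\vh_i)$: $\Theta(r)$ entries;
\item its SiLU output: $\Theta(r)$ entries.
\end{itemize}
Summed over layers this gives $\Theta(L(d+r)) = \Theta(Ld) = \Theta(d^2/r)$ per token. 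The ratio is $\Theta(d/r)$. The key point is that the $d$-dimensional residual stream must be materialized at every layer, while the parameters per layer scale only with $r$.

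\textbf{From memory to GPU-hours.} Under the hypothesis that MLP activations dominate the memory budget, the total training footprint per token is multiplied by $\Theta(d/r)$. The parameter and optimizer-state memory are unchanged, since the parameter counts match. With fixed accelerator memory $M$, the attacker then has two options:
\begin{itemize}
\item Shrink the number of tokens per step by the factor $\Theta(d/r)$. Fewer tokens per step means more sequential steps, and per-step time does not shrink proportionally once one is bandwidth or latency bound.
\item Use $\Theta(d/r)$ times more devices, or recompute activations.
\end{itemize}
Both options multiply GPU-hours by $\Theta(d/r)$. Gradient checkpointing is excluded by $\mathfrak{A}=\{\gA_{\mathrm{ft}}\}$ (full backpropagation). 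Compute alone does not give the bound, since FLOPs are comparable at equal parameter count, so the argument must go through the memory constraint. Concretely, I would define GPU-hour cost as (device-memory $\times$ time) or, equivalently, devices needed times steps, so that the memory ratio carries over directly. This yields $\kappa \ge \Theta(d/r)$, where $\Theta$ hides the architecture constants.

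\textbf{Main obstacle.} The delicate step is justifying that memory translates into wall-clock GPU-hours rather than merely peak memory. An attacker with abundant memory per device would pay no extra time. I would handle this by stating the cost model explicitly: throughput is memory-capacity limited in the regime where MLP activations dominate. Within that model the proposition is exactly the counting above.
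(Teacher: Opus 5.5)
Your proposal is correct and follows essentially the same route as the paper. Both arguments count per-token activation memory as $\Theta(d_{\mathrm{ff}})$ for SwiGLU versus $\Theta(dL)$ for the \arch{} with $L \approx 3d_{\mathrm{ff}}/(2r)$. Both then convert the $\Theta(d/r)$ ratio into GPU-hours via the fixed-memory batch-size bound $B = \lfloor (M - M_{\mathrm{fixed}})/m \rfloor$, compensated by more steps or more devices. The differences are cosmetic. The paper keeps the attention term $a_{\mathrm{attn}}$ explicit and obtains $dL/d_{\mathrm{ff}} = 3d/(2r)$ directly, without assuming $d_{\mathrm{ff}} = \Theta(d)$. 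It also leaves implicit the cost-model assumption (per-step time does not shrink proportionally with batch size) that you state openly.
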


The bound is conservative: it assumes equal sample efficiency between $f$ and~$g$; in practice, the architectural mismatch introduced by the deep residual MLPs may further slow convergence, increasing the true~$\kappa$.
The detailed derivation, including per-layer memory analysis, is given in Appendix~\ref{app:activation_memory}.

\subsection{Distillation procedure}
\label{sec:distillation}

We distill the pretrained MLP into the \arch{} in two phases: a \emph{module-wise} phase that approximates each module's input--output mapping independently, followed by a \emph{logits distillation} phase that corrects the accumulated error by matching the full model's output distribution.
In both phases, the objective is to preserve the original function of the pretrained model.
By construction, this satisfies the utility preservation condition~\emph{(i)} of Definition~\ref{def:locking} for every downstream task, unlike methods that optimize a task-specific loss and only guarantee preservation on that particular objective.

\paragraph{Phase 1: Module-wise distillation.}
For each transformer layer~$l$, we collect hidden states $\vz_l$ (after the
attention sub-layer) from the pretrained model.  The \arch{} $g^{(l)}$ is
trained to replicate the full residual sub-block $f_l(\vz) = \vz + \mathrm{FFN}_l(\mathrm{RMSNorm}(\vz))$ by minimizing the relative MSE, normalized by $\|f_l\|^2$ because output norms vary across layers.

\paragraph{Phase 2: Logits distillation.}
After all MLP layers are replaced, the small per-layer distillation errors
compound across the full forward pass, degrading language modeling performance.
We correct this by fine-tuning the \arch{} parameters (MLP only; attention
and embeddings are frozen) using the pretrained model as teacher.
The alignment loss is the top-$k$ KL divergence between teacher and student output distributions, restricting the sum to the $k$ tokens with highest teacher probability.

\begin{algorithm}[t]
\caption{\method{}: model locking via deep low-rank residual distillation}
\label{alg:defense}
\begin{algorithmic}[1]
\REQUIRE Pretrained model $\mathcal{M}$ with $N$ transformer layers, each
  containing a normalization--MLP residual sub-block; bottleneck rank~$r$;
  per-MLP parameter budget~$P$; training data~$\mathcal{D}$
\ENSURE Locked model $\hat{\mathcal{M}}$ with MLP sub-blocks replaced by \arch{}s
\STATE \textbf{Phase 1: Module-wise distillation}
\FOR{each layer $l = 1, \ldots, N$ \textbf{in parallel}}
  \STATE Initialize \arch{} $g^{(l)}$ with rank $r$ and depth $L = \lfloor P / (2dr + c_{\mathrm{norm}}) \rfloor$
  \STATE Train $g^{(l)}$ to minimize $\mathcal{L}_{\mathrm{MSE}} = \frac{1}{n}\sum_{i} \|g^{(l)}(\vz_i) - f_l(\vz_i)\|^2 / \|f_l(\vz_i)\|^2$
  \STATE Replace normalization, MLP, and skip connection with $g^{(l)}$ in $\hat{\mathcal{M}}$
\ENDFOR
\STATE \textbf{Phase 2: Logits distillation}
\STATE Freeze all parameters in $\hat{\mathcal{M}}$ except \arch{} weights
\STATE Train $\hat{\mathcal{M}}$ to minimize $\mathcal{L}_{\mathrm{KD}} = \sum_{j \in \mathrm{top\text{-}k}} p_j^{\mathrm{teacher}} \log(p_j^{\mathrm{teacher}}/p_j^{\mathrm{student}})$ using $\mathcal{M}$ as teacher
\end{algorithmic}
\end{algorithm}

The full procedure is summarized in Algorithm~\ref{alg:defense}.
Phase~2 requires back-propagating through the full locked model, facing the same activation-memory barrier as the attacker; however, Phase~1 provides a good initialization so convergence is fast, and the defender pays this cost only once.

\subsection{Attack strategies}
\label{sec:attacks}

We consider strategies an attacker may use to fine-tune the locked model.

\paragraph{Gradient checkpointing.}
\label{sec:checkpointing}
Gradient checkpointing~\citep{chen2016training} is the most direct countermeasure to activation memory overhead.
Rather than naively checkpointing at fixed intervals, an adaptive attacker can selectively recompute only the \arch{} blocks during the backward pass.
This reduces peak memory from $O(NLd)$ to $O(Ld)$, at the cost of recomputing each MLP forward pass.
The optimal tradeoff between recomputation and saved activations is achieved at $O(\sqrt{NL}\,d)$ via evenly spaced checkpoints.
If the attacker cannot afford even this memory, they must resort to hierarchical checkpointing, additional GPUs, or compromises such as shorter sequence lengths or reduced precision.
This recomputation cost is unavoidable for every attacker.

\paragraph{Partial weight updates.}
\label{sec:partial_update}
The attacker may freeze the \arch{} parameters and train only the remaining model components.
However, even with frozen weights, the RMSNorm backward pass at every layer must store $d$-dimensional intermediate vectors to compute its input Jacobian.
If the attacker further applies \texttt{stop\_grad} to avoid backpropagation entirely, gradients to earlier layers omit the main branch's contribution, producing a biased signal that degrades training.

\paragraph{Low-rank adaptation (LoRA).}
\label{sec:lora}
LoRA~\citep{hu2022lora} reduces memory by freezing the base weights and training low-rank adapters.
However, LoRA does not shorten the backward pass: backpropagation still traverses all $L$ layers to compute gradients for the adapters.
Moreover, it is non-trivial where to apply LoRA in the \arch{}: the weight matrices $\mU_i \in \R^{d \times r}$ and $\mV_i \in \R^{r \times d}$ are already low-rank by construction, so adding a further decomposition adds little expressiveness.

\paragraph{Module-wise distillation.}
\label{sec:reverse_distill}
Since we distill SwiGLU $\to$ \arch{}, the attacker can perform the exact same procedure in reverse.
However, the attacker must reverse all locked layers to fully remove the defense, and the cost scales with the number of layers.
The defender pays this cost once to protect the model; the attacker must pay it again just to enable fine-tuning.

\section{Experiments}
\label{sec:experiments}

\subsection{Experimental setup}
\label{sec:setup}

We use the Qwen3~\citep{yang2025qwen3} as our testbed.
For distillation, we use the Nemotron-CC dataset~\citep{su2025nemotron}, a curated web text corpus.
Weights are initialized as
$\mU_i \sim \mathcal{N}(0,\, 1/(d\sqrt{L}))$ and
$\mV_i \sim \mathcal{N}(0,\, 1/r)$, and $\alpha_i = 0$.
For training, both phases use AdamW~\citep{loshchilov2018decoupled} with weight decay~$10^{-5}$, cosine learning-rate schedule with linear warmup
(5\% of total steps), and gradient clipping at norm~1.
For Phase~1, each layer is trained independently for 500k steps with learning rate $3{\times}10^{-3}$, batch size~2, and sequence length~2048.
The rank is chosen so that the depth falls in the range $L \in [140, 150]$ (e.g., $r = 32$ for 0.6B).
For Phase~2, we train all \arch{} parameters jointly for 100k steps with top-$k{=}1000$, learning rate $10^{-3}$, batch size~4 and sequence length~2048.

\subsection{Quality preservation}
\label{sec:exp_e2e}

Since our algorithm replaces every MLP module, we first verify that the locked model retains the original model's capabilities.
We evaluate locked models on the Nemotron-CC validation set for perplexity on the distillation data, WikiText-103~\citep{merity2016pointer} for perplexity on unseen text, and seven benchmarks from the LM Evaluation Harness~\citep{eval-harness} for downstream task performance: MMLU, ARC (Easy and Challenge), HellaSwag, WinoGrande, BoolQ, and PIQA.

\begin{table}[t]
\centering
\begin{tabular}{l cc ccccccc}
\toprule
& \multicolumn{2}{c}{Perplexity $\downarrow$}
& \multicolumn{7}{c}{Accuracy (\%) $\uparrow$} \\
\cmidrule(lr){2-3} \cmidrule(lr){4-10}
 & Nemo & WT & MMLU & ARC-E & ARC-C & HSw & WiGr & BoolQ & PIQA \\
\midrule
Pretrained           & 14.9 & 21.5 & 40.3 & 55.9 & 33.7 & 47.3 & 56.4 & 63.8 & 67.3 \\
\arrayrulecolor{gray!80}\midrule\arrayrulecolor{black}
Module-wise dist. & 21.8 & 182 & 31.1 & 50.6 & 26.5 & 40.5 & 54.9 & 62.6 & 63.4 \\
+Logits dist.    & 14.9 & 23.4 & 36.8 & 50.7 & 29.0 & 43.5 & 55.2 & 63.8 & 65.3 \\
\bottomrule
\end{tabular}
\caption{Quality preservation of Qwen3-0.6B after model
locking.  ``Module-wise dist.'' replaces every SwiGLU MLP with a
\arch{} in Phase~1.  ``+Logits dist.'' applies logits
distillation in Phase~2 with $k{=}1000$ and learning rate $10^{-3}$.
Nemo = Nemotron, WT = WikiText-103, ARC-E/C = ARC Easy/Challenge,
HSw = HellaSwag, WiGr = WinoGrande.}
\label{tab:eval-benchmarks}
\end{table}

Table~\ref{tab:eval-benchmarks} shows the results.
Module-wise distillation alone leaves a noticeable gap in several metrics.
Logits distillation recovers Nemotron perplexity almost completely, as expected for in-distribution data, and WikiText perplexity is also largely recovered since both are natural text corpora.
On the remaining benchmarks, logits distillation closes much of the gap, bringing most metrics within a few points of the pretrained model.
To further close this gap, one can distill on a more diverse corpus beyond Nemotron, analogous to standard pretraining practice, given the strong perplexity recovery on the natural language datasets.

\subsection{Inference overhead}
\label{sec:inference}

Replacing each MLP with a depth-$L$ \arch{} preserves total theoretical inference FLOPs~(Section~\ref{sec:architecture}), but the sequential chain of small matrix multiplications incurs wall-clock overhead from kernel launch costs.
Since forward-pass benchmarking requires only the architecture, we evaluate at three scales (0.6B, 8B, 32B) in \texttt{bfloat16} on H100 GPUs using PyTorch~2.9~\citep{10.1145/3620665.3640366}, comparing eager execution and \texttt{torch.compile} with CUDA graph replay (\texttt{reduce-overhead}).

\paragraph{Compilation eliminates kernel-launch overhead.}
Because our architecture issues many small kernel launches per MLP, it benefits disproportionately from CUDA graph replay.
For instance, an isolated \arch{} at the 8B scale sees its overhead ratio drop from $144\times$ (eager) to $11\times$ with \texttt{reduce-overhead} compilation (see Figure~\ref{fig:mlp_ratio_appendix} in Appendix for full comparison).

\begin{figure}[t]
\begin{minipage}[c]{0.40\linewidth}
  \centering
  \includegraphics[width=\linewidth]{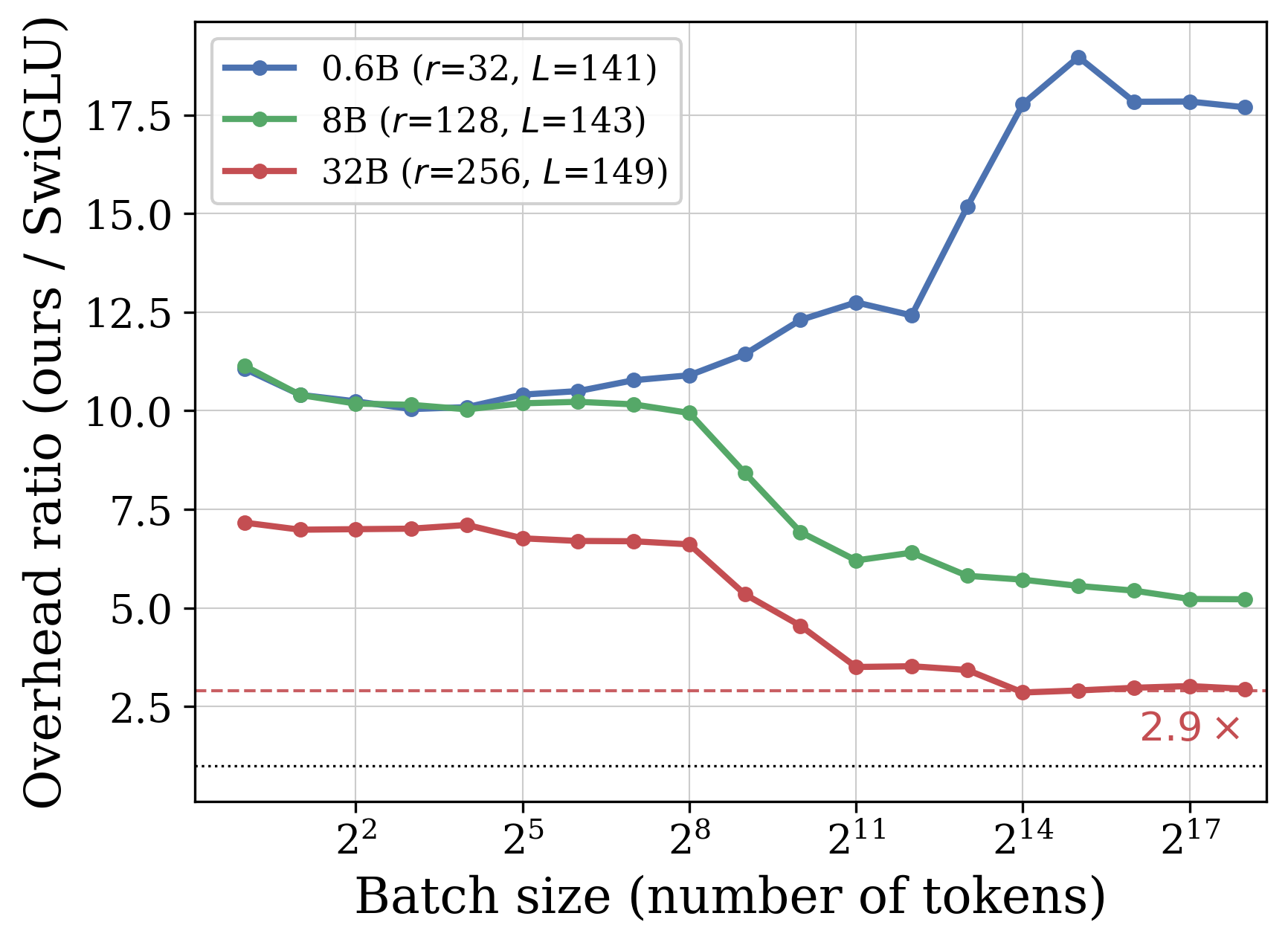}
\end{minipage}%
\hfill
\begin{minipage}[c]{0.57\linewidth}
  \centering
  \begin{tabular}{@{}l*{4}{w{c}{1.4cm}}@{}}
    \toprule
    \multicolumn{5}{c}{Qwen3-8B, fixed $n{=}32\text{k}$ tokens, prefill} \\
    \midrule
    & \multicolumn{4}{c}{Depth $L$} \\
    \cmidrule(l){2-5}
    Seq.\ len & 143 & 71 & 35 & 17 \\
    \midrule
    1024   & $3.37\times$ & $2.03\times$ & $1.45\times$ & $1.23\times$ \\
    8192   & $2.07\times$ & $1.44\times$ & $1.18\times$ & $1.08\times$ \\
    32768  & $1.39\times$ & $1.18\times$ & $1.09\times$ & $1.04\times$ \\
    \bottomrule
  \end{tabular}
\end{minipage}
\caption{%
  \textbf{Left:} MLP-level overhead ratio (\arch{}\,/\,SwiGLU) vs.\ batch size at the deepest configuration with \texttt{reduce-overhead} compilation.
  For 0.6B ($r{=}32$), arithmetic intensity remains below the roofline threshold, so the overhead stays memory-bound.
  \textbf{Right:} Full-model prefill overhead on Qwen3-8B.
  Each column corresponds to a different depth $L$ (rank $r = 128, 256, 512, 1024$).}
\label{fig:mlp_overhead}
\end{figure}

\paragraph{Batch-size scaling saturates the overhead.}
As the batch size grows, both architectures become compute-bound and the per-kernel launch cost is amortized.
With \texttt{reduce-overhead} compilation, the 32B-scale ratio drops from $7.2\times$ at batch size~1 to $2.9\times$ at batch size~$2^{18}$, converging toward the theoretical FLOP ratio.
Larger models benefit more because each residual block performs proportionally larger matrix multiplications (Figure~\ref{fig:mlp_overhead}, left).

\paragraph{Defense--speed trade-off.}
The defender controls the operating point by choosing the rank~$r$ and depth~$L$ (recall $2drL \approx 3d \cdot d_{\mathrm{ff}}$ at fixed parameter count).
Figure~\ref{fig:mlp_overhead} (right) shows the full-model prefill overhead on Qwen3-8B: at longer sequences where attention dominates, even the strongest configuration ($r{=}128$, $L{=}143$) reaches $1.4\times$, while a moderate depth ($L{=}35$) is nearly transparent at $1.09\times$.
Detailed latency tables are provided in Appendix~\ref{app:latency}.

\subsection{Resistance to fine-tuning}
\label{sec:exp_resistance}

We now evaluate the core claim: whether the locked model resists fine-tuning by an attacker.
We use the same locked model as in Section~\ref{sec:exp_e2e}, with gradient checkpointing enabled to reduce memory overhead.
In principle, the defender could further strengthen the defense by evaluating multiple distillation configurations on public datasets and selecting the most resistant one.
The attacker sweeps the fine-tuning learning rate over $\{3{\times}10^{-6}, 10^{-5}, 3{\times}10^{-5}, 10^{-4}\}$, each compared against fine-tuning the baseline unlocked model at the same rate.

\begin{figure}[t]
\centering
\includegraphics[width=\linewidth]{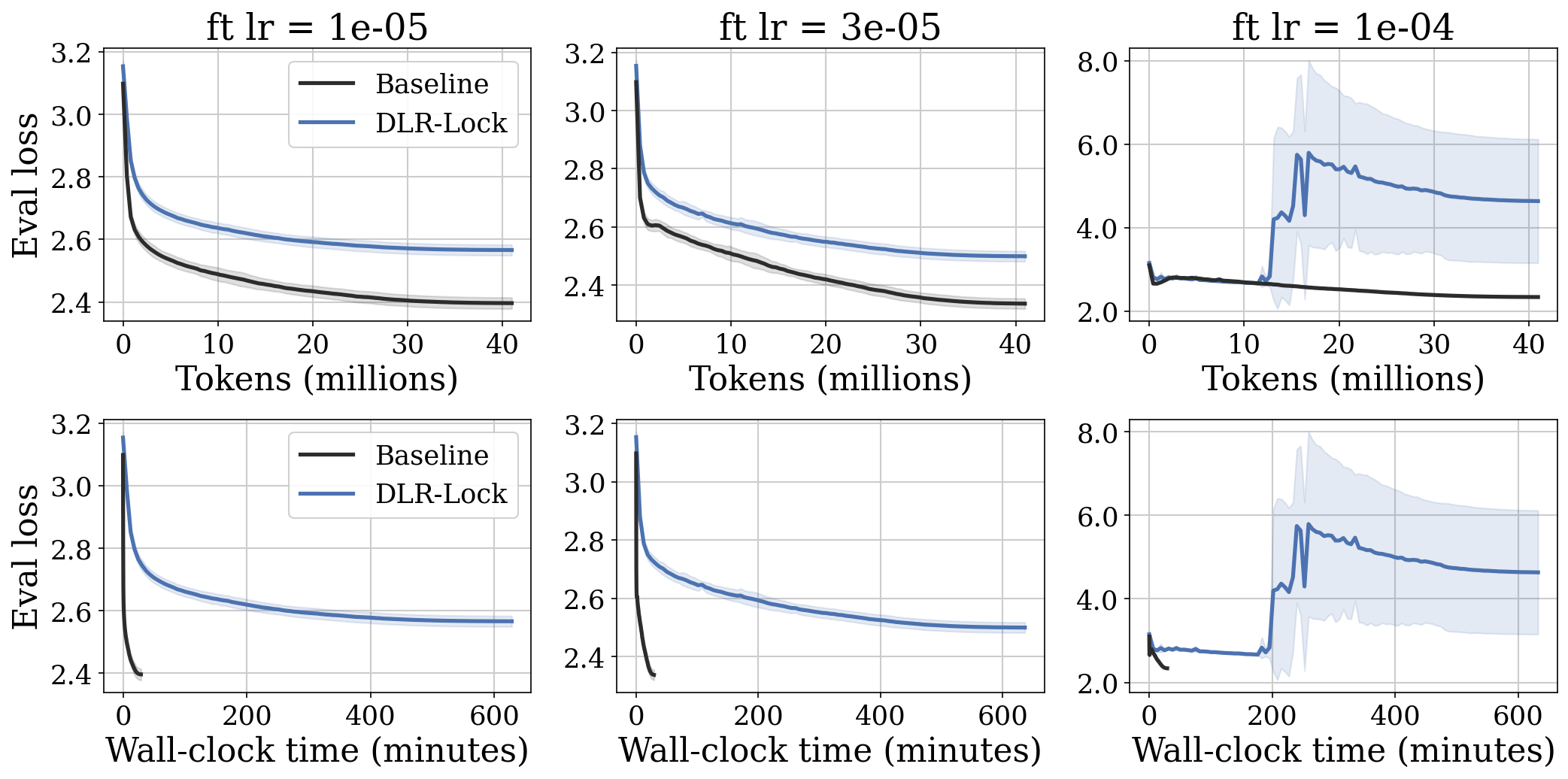}
\caption{Fine-tuning convergence comparison between the pretrained baseline and the locked model on WikiText-103.
\textbf{Top:} loss vs.\ number of training tokens (same batch size and sequence length for both); \textbf{Bottom:} loss vs.\ wall-clock time.
Columns correspond to fine-tuning learning rates.
Shaded regions indicate $\pm 1$ standard deviation across 3 seeds.
In wall-clock terms, the backward pass accounts for \textbf{83\%} of total training time for the baseline and \textbf{89\%} for the locked model, confirming that training-specific computation dominates the overhead.}
\label{fig:finetune_speed_wikitext}
\end{figure}

\paragraph{Results.}
Figure~\ref{fig:finetune_speed_wikitext} shows convergence curves at three fine-tuning learning rates; we omit $3{\times}10^{-6}$ as it is too small and shows the same trend as $10^{-5}$.
The locked model starts at a similar loss to the baseline, confirming that the final loss difference cannot be explained by the initial loss difference alone.
At lower rates ($10^{-5}$, $3{\times}10^{-5}$), the baseline steadily improves while the locked model converges more slowly and plateaus at a higher loss, leaving a persistent gap even at the attacker's best rate.
At the largest rate ($10^{-4}$), the locked model diverges entirely while the baseline trains normally, illustrating that the optimization itself becomes more unstable by our defense and narrowing the attacker's effective learning rate range.
The bottom row shows that in wall-clock terms, this gap is widely amplified, including by the recomputation cost of gradient checkpointing.

One may ask whether the wall-clock delay stems from forward-pass overhead rather than genuine training difficulty.
As shown in Figure~\ref{fig:finetune_speed_wikitext}, the backward pass accounts for 83\% of total training time for the baseline and 89\% for the locked model; the ratio not only dominates but further increases for the locked model, confirming that training-related computation grows disproportionately.
Expressed relative to inference cost, the backward and optimizer overhead amounts to $\mathbf{4.9\times}$ the forward time for the baseline versus $\mathbf{8.1\times}$ for the locked model, where the increase is driven by recomputation necessitated by the higher memory footprint.

\paragraph{Backward pass resists optimization.}
\begin{wraptable}{r}{0.38\textwidth}
\vspace{-1.2em}
\centering
\small
\setlength{\tabcolsep}{5pt}
\caption{Single \arch{} compilation ratio (same setting as \S\ref{sec:inference}). $t_b'/t_b > 1$ means compilation \emph{hurts} training.}
\label{tab:compile_asymmetry}
\vspace{0.3em}
\begin{tabular}{@{}l cc cc@{}}
\toprule
& \multicolumn{2}{c}{0.6B} & \multicolumn{2}{c}{8B} \\
\cmidrule(lr){2-3}\cmidrule(l){4-5}
$n$ & $t_f'/t_f$ & $t_b'/t_b$ & $t_f'/t_f$ & $t_b'/t_b$ \\
\midrule
4     & 0.07 & \textbf{1.24} & 0.09 & \textbf{1.28} \\
256   & 0.07 & \textbf{1.23} & 0.11 & \textbf{1.92} \\
16384 & 0.33 & \textbf{9.09} & 0.34 & \textbf{1.47} \\
\bottomrule
\end{tabular}
\vspace{-1em}
\end{wraptable}
Crucially, the same compilation gains from Section~\ref{sec:inference} do not transfer to training.
Let $t_f$ denote inference forward time, $t_e$ one training step, and $t_b = t_e - t_f$ the training overhead; primed quantities denote the compiled setting.
Table~\ref{tab:compile_asymmetry} isolates a single \arch{} and measures how \texttt{reduce-overhead} compilation affects $t_f$ and $t_b$.
Compilation significantly speeds up inference, but the backward pass, which involves gradient computation and memory access patterns that resist graph capture, shows that training overhead becomes \emph{worse} than eager execution.
This means the defender can offer normal users a fully optimized model via compilation or custom kernels, since the sequential forward pass is amenable to such optimizations.
The attacker, who must compute gradients, cannot benefit from these same optimizations: the backward pass cannot simply propagate a single hidden state like the forward.
Our deep architecture amplifies this asymmetry.

\section{Conclusion}
\label{sec:conclusion}

We introduced a model locking approach that replaces MLP layers with deep low-rank residual networks, exploiting the inference--training asymmetry of automatic differentiation.
The locked model imposes three complementary barriers: activation memory that grows with depth, architectural mismatch destabilizing fine-tuning, and a backward pass disproportionately costlier than the forward pass.
A two-phase distillation procedure preserves the original model's capabilities while achieving a measurable locking effect against adaptive attackers.
The natural next step is reducing the forward-pass overhead through custom fused kernels or batched computation across residual layers.

\bibliographystyle{plainnat}
\bibliography{references}

\newpage
\appendix

\section{Related Work}
\label{app:related}

\paragraph{Preventing harmful model editing.}
A comprehensive survey of security risks in open-weight models is provided
by~\citet{casper2026open}.  The first direction is preventing fine-tuning to
harmful content~\citep{huang2024vaccine,tamirisa2025tamperresistant,zhao2025understanding,huang2025booster,zheng2025model,liu2025targeted,wang2026selfdestructive,yang2026asft,nguyen2026antibody}
or robustifying unlearned models against relearning
attacks~\citep{lynch2024eight,hu2025unlearning,fan2025towards,lee2025distillation}.
The major line of work takes a meta-learning approach, optimizing a bilevel
objective or training the model under simulated
tampering.%
  \citet{qi2025on} show that existing objective-level defenses are
broken by simple hyperparameter changes, motivating structural alternatives.
Another approach modifies the model's internal
representations~\citep{rosati2024representation}, which can be naturally
combined with our method: representation modification alters the input--output relationship for safety, whereas our architectural replacement
preserves it.
The two mechanisms are complementary and can be applied jointly.
A separate line of work targets model merging
prevention~\citep{junhao2025disrupting,wang2025model}: these methods
reparameterize weight matrices using the symmetries discussed in
Section~\ref{sec:difficult}, moving the model into a different loss basin to
disrupt merging.  However, such reparameterizations can be detected and
reversed by an attacker who knows the defense strategy.

\paragraph{Module replacement via distillation.}
Building on knowledge distillation~\citep{hinton2015distilling}, several works replace individual transformer modules with compact alternatives for model compression~\citep{xu2020bert,zhou2023modular,liang2023modulewise,liang2023less,lo2024m2mkd,ro2025onthefly}.
Closest to our approach, \citet{cheng2026attention} replace attention modules with alternatives such as multi-head latent attention via module-wise distillation without retraining from scratch.

\paragraph{Model locking.}
Model locking aims to make pretrained weights resistant to \emph{any} form of adaptation---not only harmful fine-tuning.
While methods preventing harmful editing apply to both the open-weight and fine-tuning-as-a-service (FTaaS, where the provider performs the fine-tuning) settings, model locking specifically targets the open-weight case, where the attacker controls the full training pipeline.
SpecDef~\citep{rosati2025locking} and \citet{wang2026towards} are the closest
works to ours; we discuss them in detail in the following paragraphs.
Concurrently, \citet{rosati2026limits} study spectral weight decomposition as a defense but acknowledge that adaptive attackers can trivially revert it.
Additionally, they show that any spectral deformation can be diluted by decomposing it across consecutive layers, motivating a shift from weight-space defenses to our structural approach.
Our work differs in that we formalize the problem (Definitions~\ref{def:defense}--\ref{def:locking}) and propose an architectural defense with a quantifiable locking factor~$\kappa$.

\paragraph{Spectral Deformation (SpecDef; \citealt{rosati2025locking}).}
SpecDef maximizes the top-$k$ singular values of weight matrices as an instance of \eqref{eq:lock-obj} with $\Omega(\theta) = \sum_l \sum_{i=1}^{k} \sigma_i(\mW_l(\theta))$.
This is a post-hoc method: it can be applied to any pretrained model without retraining from scratch.
We note several open challenges: maximizing $\Omega$ while minimizing the task loss~$\mathcal{L}$ is a multi-objective problem subject to the self-defeating structure discussed in Section~\ref{sec:difficult} (Figure~\ref{fig:specdef}); moreover, using the task loss as the faithfulness constraint risks overfitting to that specific task, with no guarantee that general capabilities are preserved---in contrast, distillation-based approaches maintain the input-output relationship of the original model and thus preserve performance across all tasks in principle.
Additionally, the experimental evaluation is conducted at moderate scale, leaving scaling behavior as an open question.

\paragraph{Non-Fine-Tunable Foundation Models~\citep{wang2026towards}.}
This method embeds a lottery-ticket-style binary mask during pretraining: a subset of weights is designated as ``locked'' (masked) and the remaining weights are trained to compensate.
An attacker unaware of the mask structure optimizes all weights, entering a different optimization space from the one the model was trained in.
We note several open challenges: the binary mask reduces effective model capacity since masked weights do not contribute to the forward computation; the mask structure may be detectable via statistics such as weight magnitude or second-order derivatives, in addition to the gradient norm they investigated; the experimental evaluation is conducted at small scale; and most importantly, the comparison between masked and unmasked models reports similar benchmark performance, but both achieve scores near chance level on binary tasks (e.g., ${\sim}50\%$), making it difficult to conclude that the proposed locked method does not sacrifice the original performance.

\section{Activation memory analysis}
\label{app:activation_memory}

This section provides the detailed activation memory analysis underlying
Proposition~\ref{prop:kappa}.

\begin{proposition}[DLR-Net activation memory]
\label{prop:activation}
For a depth-$L$ network of the form~\eqref{eq:arch}:
\begin{enumerate}
\item[\emph{(a)}] \textbf{Full training.}
  Each layer stores: the RMSNorm input~$\vh_i$ ($d$), the RMSNorm output ($d$),
  the SiLU pre-activation ($r$), the SiLU output ($r$), and the block
  output ($d$) for the $\alpha_i$~gradient.
  Per-layer cost: $3d + 2r \approx 3d$.  Total: $\Theta(dL)$.
\item[\emph{(b)}] \textbf{Frozen parameters.}
  Each layer stores: the RMSNorm input~$\vh_i$ ($d$, needed because
  the backward Jacobian depends on $\vh_i$) and the SiLU pre-activation ($r$).
  Per-layer cost: $d + r \approx d$.  Total: $\Theta(dL)$.
\end{enumerate}
\end{proposition}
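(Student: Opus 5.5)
The plan is to write out the reverse-mode differentiation of a single residual layer of~\eqref{eq:arch} and, for each local Jacobian or parameter gradient, identify which forward quantities must be saved. Then we sum over $i=0,\dots,L-1$. Write $\vu_i = \mathrm{RMSNorm}(\vh_i)$, $\va_i = \mV_i \vu_i \in \R^r$, $\vs_i = \sigma(\va_i)\in\R^r$, and $\vb_i = \mU_i \vs_i \in \R^d$, so that $\vh_{i+1} = \vh_i + \alpha_i \vb_i$. Given the upstream gradient $\bar{\vh}_{i+1}$, the backward pass computes the following quantities:
\begin{align}
\bar\alpha_i &= \langle \bar{\vh}_{i+1}, \vb_i\rangle, \quad \bar{\mU}_i = \alpha_i \bar{\vh}_{i+1}\vs_i^\top, \quad \bar{\vs}_i = \alpha_i\mU_i^\top \bar{\vh}_{i+1}, \quad \bar{\va}_i = \sigma'(\va_i)\odot\bar{\vs}_i,\\
\bar{\mV}_i &= \bar{\va}_i \vu_i^\top, \quad \bar{\vu}_i = \mV_i^\top\bar{\va}_i, \quad \bar{\vh}_i = \bar{\vh}_{i+1} + J_{\mathrm{RMS}}(\vh_i)^\top \bar{\vu}_i .
\end{align}
Reading off the dependencies gives the inventory for part~(a). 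The $\alpha_i$ gradient needs $\vb_i$ ($d$). The $\mU_i$ gradient needs $\vs_i$ ($r$). The SiLU derivative needs $\va_i$ ($r$). The $\mV_i$ gradient needs $\vu_i$ ($d$). The RMSNorm Jacobian $J_{\mathrm{RMS}}(\vh)=\frac{\gamma}{\mathrm{rms}(\vh)}\bigl(I - \tfrac{\vh\vh^\top}{d\,\mathrm{rms}(\vh)^2}\bigr)$ needs $\vh_i$ ($d$), or equivalently $\vh_i$ together with a scalar. This totals $3d+2r$, plus $O(1)$ scalars, which is $\approx 3d$ since $r\ll d$.

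For part~(b), we freeze $\mU_i,\mV_i,\alpha_i$ and the norm gains, so $\bar\alpha_i,\bar{\mU}_i,\bar{\mV}_i$ are no longer formed. This removes the need to store $\vb_i$, $\vs_i$ and $\vu_i$. The chain $\bar{\vh}_{i+1}\mapsto\bar{\vs}_i\mapsto\bar{\va}_i\mapsto\bar{\vu}_i\mapsto\bar{\vh}_i$ still requires $\sigma'(\va_i)$ and $J_{\mathrm{RMS}}(\vh_i)$. So $\va_i$ ($r$) and $\vh_i$ ($d$) remain, for a per-layer cost of $d+r\approx d$. The totals in both cases follow by summing $L$ identical per-layer costs, giving $\Theta(dL)$. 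The upper bound is $(3d+2r)L$ and the lower bound is $dL$.

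The main subtlety is justifying the lower bound $\Omega(dL)$, i.e.\ that the $d$-dimensional $\vh_i$ cannot be avoided. One could try to argue that $\vh_i$ is recoverable, but it is not: the residual map $\vh_i\mapsto\vh_{i+1}$ is not cheaply invertible in general, and $J_{\mathrm{RMS}}(\vh_i)$ genuinely depends on the full direction of $\vh_i$, not just on a low-dimensional summary. The reason is that the projector $\vh_i\vh_i^\top$ acts on an arbitrary $\bar{\vu}_i\in\mathrm{range}(\mV_i^\top)$. Since the proposition concerns standard backpropagation without recomputation, storing is forced. Checkpointing trades this memory for recomputation, as discussed in Section~\ref{sec:attacks}.

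Finally, for Proposition~\ref{prop:kappa}, compare with SwiGLU, which stores $O(d+d_{\mathrm{ff}})$ per token. Combining this with $L\approx 3d\,d_{\mathrm{ff}}/(2dr)$ yields the $\Theta(d/r)$ memory ratio when MLP activations dominate.
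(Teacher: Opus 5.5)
Your inventory and backward formulas match the paper, which justifies the proposition only by listing the tensors that standard reverse-mode autodiff saves in each layer and summing over the $L$ layers. Parts (a) and (b) are correct under that storage policy, as are your bounds $dL \le \text{memory} \le (3d+2r)L$.

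The paragraph you call the main subtlety makes a false claim and should be dropped. $\vh_i$ \emph{is} cheaply recoverable. We have $\vh_{i+1} = \vh_i + \alpha_i \mU_i \bm{s}_i$ with $\bm{s}_i = \sigma(\bm{a}_i)$, and the $r$-dimensional $\bm{a}_i$ is already in your stored set in both cases. A backward sweep can therefore keep only $\vh_L$ and the $\bm{a}_i$, and reconstruct $\vh_i = \vh_{i+1} - \alpha_i \mU_i \sigma(\bm{a}_i)$ layer by layer. This is exact in exact arithmetic, costs one $d \times r$ matrix--vector product per layer, and uses $O(d + rL)$ memory per token.

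Two of your supporting points do not rescue the claim. Whether the residual map can be inverted without side information is beside the point, because the side information is already stored. The dependence of $J_{\mathrm{RMS}}(\vh_i)$ on the full vector $\vh_i$ only shows that $\vh_i$ must be \emph{available} during the backward pass, not that it must be \emph{stored}.

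So $\Omega(dL)$ is not an unavoidable cost. It holds for the standard policy of saving each operation's inputs, which is all the proposition asserts, and under that policy your lower bound follows directly from the inventory. If you keep a remark about the lower bound, state it in those terms.
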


\noindent Case~(a) is the standard training cost.
Importantly, case~(b) shows that this overhead cannot be avoided even when the
attacker freezes all DLR-Net parameters and trains only other components.

Using this result, we now prove the cost amplification bound stated in the main text.

\noindent\textbf{Proof of Proposition~\ref{prop:kappa}.}
Recall that the pretrained transformer has $N$~layers, hidden dimension~$d$,
and FFN intermediate dimension~$d_{\mathrm{ff}}$.
The locked model replaces all $N$ SwiGLU MLPs with depth-$L$ \arch{}s
of rank~$r$, where $L = \lfloor 3d_{\mathrm{ff}} / (2r) \rfloor$ matches the
original parameter count.
By Proposition~\ref{prop:activation}(a), the total per-token activation memory
for the MLP layers increases from $\Theta(N d_{\mathrm{ff}})$ (one SwiGLU per
layer) to $\Theta(N d L)$ (depth-$L$ \arch{} per layer), an amplification
factor of $\Theta(d/r)$ at matched parameter count.

On a GPU with fixed memory~$M$, the maximum batch size is
$B = \lfloor (M - M_{\mathrm{fixed}}) / m \rfloor$, where $m$ is the per-token
activation memory.
Let $a_{\mathrm{attn}}$ denote the per-token per-layer activation memory for
the attention blocks (unchanged), and let $B_f$, $B_g$ denote the maximum batch
sizes for the original and locked models respectively.  The batch size ratio
satisfies
\[
  \frac{B_g}{B_f}
  \;\lesssim\;
  \frac{a_{\mathrm{attn}} + d_{\mathrm{ff}}}{a_{\mathrm{attn}} + dL}.
\]
The attacker must process the same number of tokens~$T$ (by assumption).
Whether the reduced batch size is compensated by more gradient steps or more
GPUs via data parallelism, the GPU-hour cost increases by at least
\[
  \kappa
  \;\gtrsim\;
  \frac{a_{\mathrm{attn}} + dL}{a_{\mathrm{attn}} + d_{\mathrm{ff}}}.
\]
At matched parameter count, the numerator grows as
$\Theta(d \cdot d_{\mathrm{ff}} / r)$, yielding $\kappa = \Theta(d/r)$ when
MLP activations dominate ($dL \gg a_{\mathrm{attn}}$).
\hfill$\square$

\noindent\textbf{Remark.}  The proof assumes naive full training without
gradient checkpointing.  In practice, the attacker may use checkpointing to
trade compute for memory (as in Section~\ref{sec:checkpointing}), but
the peak memory of this technique still grows with~$L$: recomputation
materializes activations one segment at a time, reducing the dependence to
$O(\log L)$ with recursive checkpointing.  In practice this makes OOM unlikely,
but since $M$ is treated as a constant in the asymptotic analysis, focusing on
standard training without checkpointing is justified.

\section{An example of ill-conditioning from weight symmetries}
\label{app:scale_symmetry}

We illustrate the ill-conditioning caused by the weight symmetries
discussed in Section~\ref{sec:difficult} on the matrix factorization problem
\begin{equation}\label{eq:matfac}
  \min_{\mW_1, \mW_2 \in \R^{d \times d}} \;
  \ell(\mW_1, \mW_2) \;=\; \|\mW_1\mW_2 - \mM\|_F^2,
\end{equation}
where $\mM \in \R^{d \times d}$ is a fixed target.
This simple setting captures the structure
of Example~2: for any invertible $\mA \in \R^{d \times d}$,
the reparameterization
$(\mW_1,\mW_2) \mapsto (\mW_1\mA^{-1},\,\mA\mW_2)$
preserves the product $\mW_1\mW_2$ exactly while changing the individual
factors and hence the optimization landscape.
We study the scalar case $\mA = a\mI$ with $a > 0$,
giving $(\mW_1,\mW_2) \mapsto (a^{-1}\mW_1,\,a\mW_2)$.

Before presenting the experimental results, we show that such a scaling
worsens the condition number of the Hessian, leading to slower convergence.

\begin{proposition}[Ill-conditioning under scaling]\label{prop:scaling}
Let $\ell(\mW_1,\mW_2) = \|\mW_1\mW_2 - \mM\|_F^2$ with
$\tilde{\mW}_1 = a^{-1}\mW_1$ and $\tilde{\mW}_2 = a\mW_2$, and suppose that $\mW_1$ and $\mW_2$ are full-rank.
Then, the condition number of the Hessian of $\ell$ with respect to the parameters $(\tilde{\mW}_1, \tilde{\mW}_2)$ satisfies
\begin{equation}\label{eq:kappa}
  \kappa \;\geq\;
    a^4 \cdot \frac{\sigma_{\max}^2(\mW_2)}{\sigma_{\min}^2(\mW_1)},
\end{equation}
where $\sigma_{\max}(\cdot)$ and $\sigma_{\min}(\cdot)$ denote the largest and smallest singular values, so that $\kappa = \Omega(a^4)$.
\end{proposition}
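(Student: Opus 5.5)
The plan is to bound the extreme eigenvalues of the Hessian $H$ of $\ell$ at $(\tilde{\mW}_1,\tilde{\mW}_2)$ using Rayleigh quotients along two test directions. The first direction perturbs only $\tilde{\mW}_1$; the second perturbs only $\tilde{\mW}_2$. The useful observation is that $\ell$ is a quadratic function of each factor separately. So along a pure-$\tilde{\mW}_1$ direction $(\mE,0)$ the second derivative is exact and contains no residual term:
\[
\frac{d^2}{dt^2}\,\ell(\tilde{\mW}_1+t\mE,\tilde{\mW}_2)=2\|\mE\tilde{\mW}_2\|_F^2=2a^2\|\mE\mW_2\|_F^2 .
\]
Similarly, along $(0,\mF)$ we get $2\|\tilde{\mW}_1\mF\|_F^2=2a^{-2}\|\mW_1\mF\|_F^2$. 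This sidesteps the mixed block of $H$, which involves the residual $\tilde{\mW}_1\tilde{\mW}_2-\mM$. That block is the only messy part, and it is never needed.

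\emph{Step 1 (upper end of the spectrum).} Let $\vv$ be the top left singular vector of $\mW_2$. Take $\mE=\ve_1\vv^\top$ with unit Frobenius norm. Then $\|\mE\mW_2\|_F^2=\|\vv^\top\mW_2\|^2=\sigma_{\max}^2(\mW_2)$. Hence $\lambda_{\max}(H)\ge \langle(\mE,0),H(\mE,0)\rangle = 2a^2\sigma_{\max}^2(\mW_2)$.

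\emph{Step 2 (lower end of the spectrum).} Let $\vu$ be the right singular vector of $\mW_1$ for $\sigma_{\min}(\mW_1)$, and take $\mF=\vu\ve_1^\top$ with unit Frobenius norm. Then $\|\mW_1\mF\|_F^2=\|\mW_1\vu\|^2=\sigma_{\min}^2(\mW_1)$. Hence $\lambda_{\min}(H)\le 2a^{-2}\sigma_{\min}^2(\mW_1)$. This quantity is strictly positive because $\mW_1$ is full rank, which is where that hypothesis is used.

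\emph{Step 3.} Divide the two bounds to obtain
\[
\kappa=\frac{\lambda_{\max}}{\lambda_{\min}}\ \ge\ a^4\,\frac{\sigma_{\max}^2(\mW_2)}{\sigma_{\min}^2(\mW_1)} .
\]
Since $\mW_1,\mW_2$ are fixed, this gives $\kappa=\Omega(a^4)$.

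The main obstacle I expect is making the notion of ``condition number'' precise. Away from a minimizer, $H$ can be indefinite. At a minimizer, $H$ is PSD but singular along the symmetry directions $(-\tilde{\mW}_1\mB,\mB\tilde{\mW}_2)$. I would state the convention $\kappa=\lambda_{\max}/\lambda_{\min}$, with $\kappa=+\infty$ whenever $\lambda_{\min}\le 0$. Under that convention the bound holds trivially in the degenerate cases, and Steps 1--3 cover the positive-definite case. If the intended convention is instead the ratio of largest to smallest eigenvalue in absolute value, an extra argument would be needed to control negative eigenvalues, so I would fix the convention explicitly at the start.
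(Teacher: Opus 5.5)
Your proof is correct and is essentially the paper's argument. The paper computes the diagonal Hessian blocks $2a^2(\mW_2\mW_2^\top)\otimes\mI_d$ and $\tfrac{2}{a^2}\mI_d\otimes(\mW_1^\top\mW_1)$ and restricts the Rayleigh quotient to each block, which is exactly what your pure-$\tilde{\mW}_1$ and pure-$\tilde{\mW}_2$ test directions do.

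Your explicit convention ($\kappa=+\infty$ when $\lambda_{\min}\le 0$) is a real improvement. The paper tacitly treats $\tilde{\mH}$ as positive semidefinite when it bounds $\sigma_{\min}(\tilde{\mH})$ ``similarly.'' Under the singular-value convention the stated constant is in fact false at points with large residual: for $d=1$, $\mW_1=\mW_2=1$, $\mM=10$, the mixed entry is $-16$, $\det\tilde{\mH}=-252$, and $\sigma_{\max}(\tilde{\mH})/\sigma_{\min}(\tilde{\mH})\approx a^4/63$. So the ``extra argument'' you anticipate cannot exist for the exact constant; only the $\Omega(a^4)$ rate survives under that convention.
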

\begin{proof}
Since $\ell = \|\tilde{\mW}_1\tilde{\mW}_2 - \mM\|_F^2$, the gradients with respect to each weight matrix are given by
\begin{equation}\label{eq:grad_w1_w2}
\nabla_{\tilde{\mW}_1} \ell
= 2 ( \tilde{\mW}_1\tilde{\mW}_2 - \mM ) \tilde{\mW}_2^\top,
\quad
\nabla_{\tilde{\mW}_2} \ell
= 2 \tilde{\mW}_1^\top ( \tilde{\mW}_1\tilde{\mW}_2 - \mM ).
\end{equation}
Let $\mathrm{vec}(\cdot)$ denote column-major vectorization of a matrix.
Differentiating again gives
\begin{equation}\label{eq:hess_aa}
  \tilde{\mH}_{1,1}
    \;=\; \frac{\partial^2 \ell}{\partial\,\mathrm{vec}(\tilde{\mW}_1)^2}
    \;=\; 2\,(\tilde{\mW}_2 \tilde{\mW}_2^\top) \otimes \mI_d
    \;=\; 2a^2\,(\mW_2\mW_2^\top) \otimes \mI_d,
\end{equation}
where $\otimes$ denotes the Kronecker product.
Similarly, we have
\begin{equation}\label{eq:hess_bb}
  \tilde{\mH}_{2,2}
    \;=\; \frac{\partial^2 \ell}{\partial\,\mathrm{vec}(\tilde{\mW}_2)^2}
    \;=\; 2\,\mI_d \otimes (\tilde{\mW}_1^\top \tilde{\mW}_1)
    \;=\; \tfrac{2}{a^2}\,\mI_d \otimes (\mW_1^\top\mW_1).
\end{equation}
Therefore, the largest singular value of the Hessian $\tilde{\mH}$ with respect to $(\tilde{\mW}_1, \tilde{\mW}_2)$ is given by
\begin{equation}\label{eq:hess_aa_max}
  \sigma_{\max}(\tilde{\mH})
  \;=\;
  \max_{\| \vv \|_2 = 1} \vv^\top \tilde{\mH} \vv
  \;\geq\;
  \max_{\| \vv \|_2 = 1} \vv^\top \tilde{\mH}_{1,1} \vv
  \;=\;
  2a^2 \sigma_{\max}\left( (\mW_2\mW_2^\top) \otimes \mI_d \right),
\end{equation}
where the inequality follows by restricting the last $d^2$ entries to zero, and the last equality is from \eqref{eq:hess_aa}.
Using the property of the Kronecker product, we have
\begin{equation}\label{eq:hess_max}
  \sigma_{\max}(\tilde{\mH})
  \;\geq\;
  2a^2 \sigma_{\max}\left( \mW_2\mW_2^\top \right) \sigma_{\max}\left( \mI_d \right)
  \;=\;
  2a^2 \sigma_{\max}\left( \mW_2 \right)^2.
\end{equation}
Similarly, \eqref{eq:hess_bb} leads to
\begin{equation}\label{eq:hess_min}
  \sigma_{\min}(\tilde{\mH})
  \;\leq\;
  \frac{2}{a^2} \sigma_{\min}\left( \mW_1 \right)^2.
\end{equation}
Combining with \eqref{eq:hess_max} gives the desired result.
\end{proof}

This result shows that the condition number is worsened at the rate of $a^4$.
The full-rank assumption in Proposition~\ref{prop:scaling} is satisfied almost surely under standard random initialization.

\begin{remark}[Connection to classical convergence bounds]

Proposition~\ref{prop:scaling} only shows that the reparameterization worsens conditioning in this matrix-factorization example.
Independently of this example, classical first-order optimization theory provides several conditions in which larger condition number leads to slower convergence \citep{boyd2004convex}.
As one representative case, we focus on the classical $\mu$-strongly convex and $L$-smooth setting; the gradient descent with stepsize smaller than $1/L$ has the linear convergence rate $1 - \mu/L$, yielding the $\epsilon$-iteration complexity $t = O(L/
\mu \log(1/\epsilon)) = O(\kappa \log(1/\epsilon))$, where $\kappa$ is the condition number.
Therefore, the larger condition number increases the complexity bound, which implies a slower convergence.
\end{remark}

\paragraph{Experimental verification.}

We verify the above discussion empirically by training the matrix factorization problem~\eqref{eq:matfac} with and without the scaling reparameterization.
We set $d = 64$ with entries of $\mM$, $\mW_1$, $\mW_2$ drawn i.i.d.\ from $\mathcal{N}(0, 1/d)$, apply the scaling with $a = 100$, and train with both SGD and Adam across three learning rates each.

\begin{figure}[h]
  \centering
  \includegraphics[width=\linewidth]{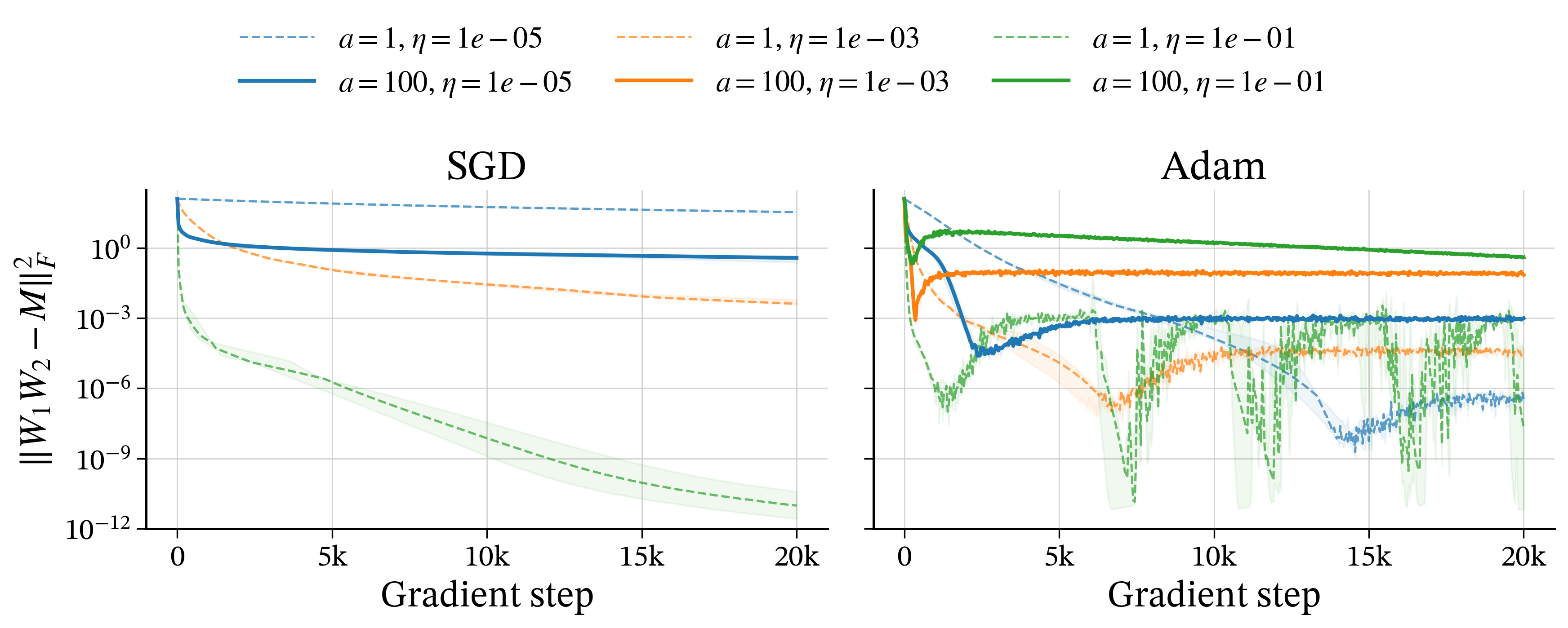}
  \caption{%
  Training loss for matrix factorization $\|\mW_1\mW_2 - \mM\|_F^2$ from balanced ($a=1$, dashed) and scaled ($a=100$, solid) initialization.
  \textbf{Left:} SGD. Under the scaling, the two larger learning rates diverge (not shown). Only the smallest learning rate remains stable, yet it plateaus orders of magnitude above the $a=1$ optimum.
  \textbf{Right:} Adam. All learning rates converge at $a=100$,
  albeit slower than at $a=1$.
  Shaded regions show the interquartile range over five seeds.}
  \label{fig:scale_symmetry}
\end{figure}

Figure~\ref{fig:scale_symmetry} shows that the bad condition number actually leads to a slower convergence.
For SGD at $a=100$, the only smallest learning rate is stable; and the convergence is extremely slow because the gradient of the enlarged factor $\tilde{\mW}_2 = a\mW_2$ scales as $a^{-1}$, so it barely
updates while $\tilde{\mW}_1$ converges to its conditionally optimal value.
Larger learning rates immediately diverge.
Adam does not exhibit the same divergence because its element-wise normalization makes the effective step size approximately scale-invariant.
However, it still converges slower than at $a=1$, confirming that the ill-conditioning affects optimization even with adaptive methods.

\section{Additional experimental results}
\label{app:additional}

\subsection{Perturbation limit: Hessian scope comparison}
\label{app:hessian_scopes}

Figure~\ref{fig:specdef} (right panel) computes the Hessian trace over all parameters.
A natural question is whether restricting the penalty to a subset of parameters makes the optimization easier---for example, concentrating the sharpness penalty on a single layer or a random subset might reduce interference between the MSE and $\Omega$ objectives.
This setting directly corresponds to increasing the condition number of the Hessian: high curvature in a subset of directions makes gradient-based fine-tuning over the full parameter space difficult.
Figure~\ref{fig:hessian_scopes} tests four scopes: all parameters, the first layer only, a random 10\% mask, and a random 50\% mask.
The result is consistent across all scopes: trajectories move rightward (increasing MSE) as $\lambda$ grows, but fail to reach the high-Hessian-trace region occupied by scaling-symmetry solutions.
This confirms that the optimization difficulty observed in the main text is not an artifact of the penalty scope.

\begin{figure}[h]
  \centering
  \includegraphics[width=\linewidth]{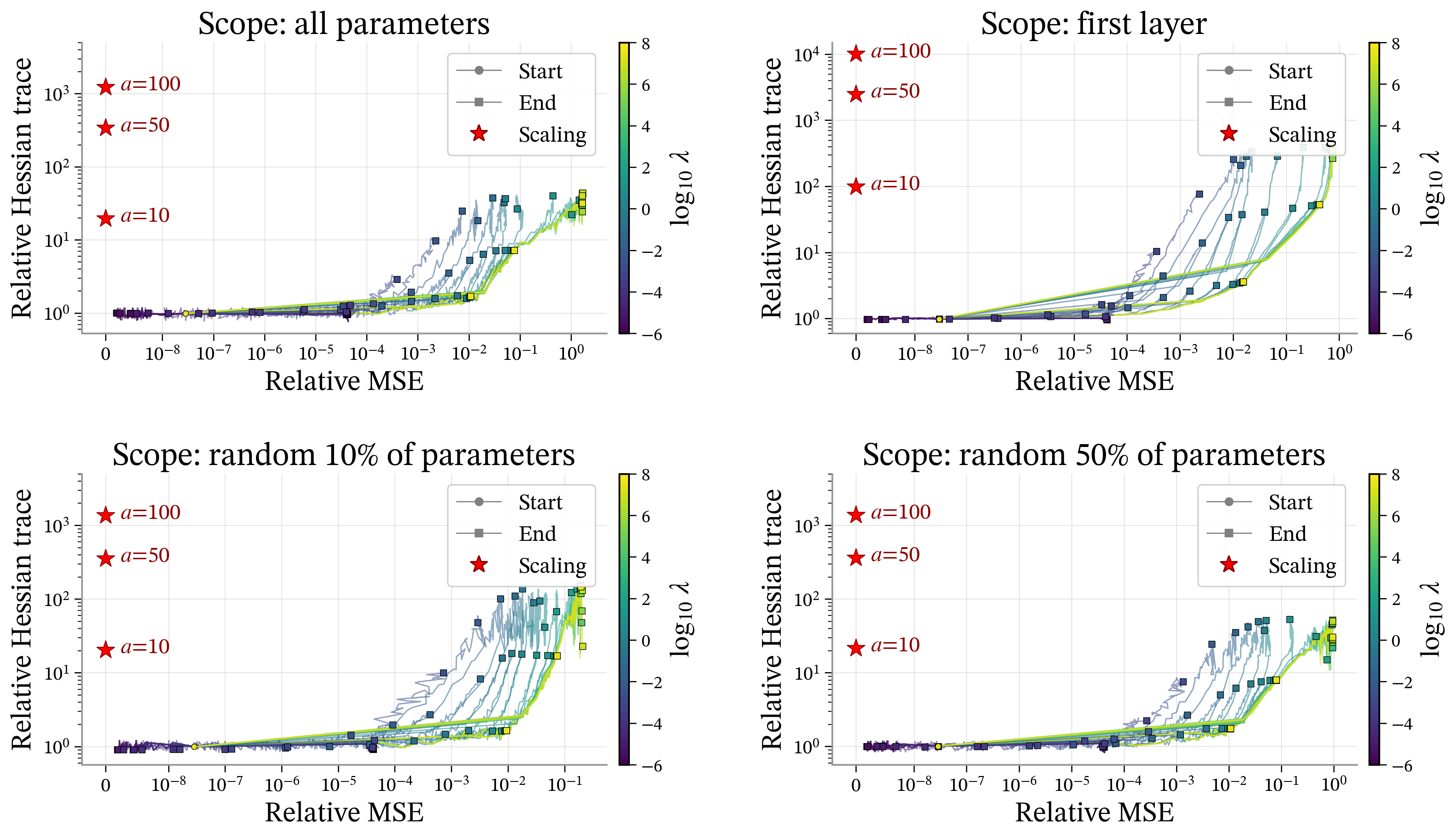}
  \caption{Hessian-trace penalty across four parameter scopes.
  Same setup as Figure~\ref{fig:specdef} (right panel), but each panel restricts the Hessian trace $\Omega = \mathrm{Tr}(H_{\mathrm{CE}})$ to a different parameter subset.
  Each curve is one trajectory; color encodes $\log_{10}\lambda$ (dark = small, bright = large).
  Circles: start; squares: end.
  Red stars: weight-symmetry solutions $(a^{-1}\mW_1,\, a\mW_2)$.
  $x$-axis: relative MSE; $y$-axis: relative Hessian trace over the corresponding subset.
  Across all scopes, no trajectory reaches the low-MSE, high-Hessian-trace corner occupied by scaling solutions.}
  \label{fig:hessian_scopes}
\end{figure}

\subsection{Inference latency}
\label{app:latency}

All latency measurements use \texttt{bfloat16} with
FlashAttention~\citep{dao2024flashattention} on a single H100 GPU,
compiled with \texttt{torch.compile} (\texttt{reduce-overhead} mode),
and averaged over 200 iterations after 50 warmup steps.

Figure~\ref{fig:mlp_ratio_appendix} shows the single-MLP overhead ratio
(\arch{}\,/\,SwiGLU) as a function of batch size.
Compilation eliminates kernel-launch overhead, reducing the ratio by
roughly $10\times$ compared to eager execution.
For 8B and 32B the ratio further decreases at large batch sizes as GPU
utilization improves; for 0.6B the small rank ($r{=}32$) keeps arithmetic
intensity below the roofline, so the overhead remains memory-bound.
Table~\ref{tab:rd_full} provides the complete overhead across all
rank/depth configurations.

\begin{figure}[h]
  \centering
  \includegraphics[width=\linewidth]{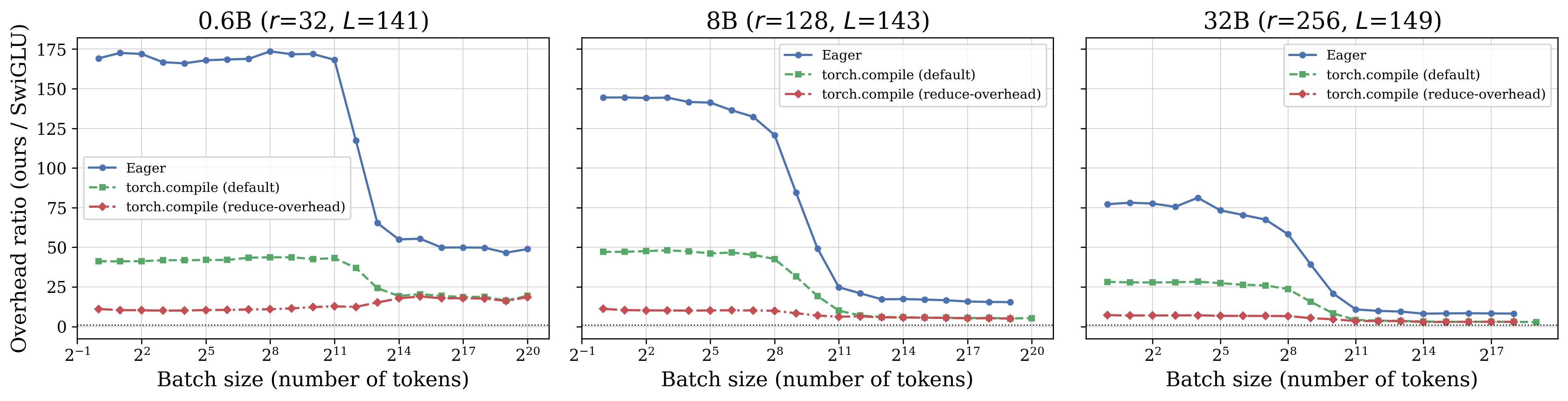}
  \caption{%
    Single-MLP overhead ratio (\arch{}\,/\,SwiGLU) as a function of
    batch size, across three model scales and three compilation modes.
    Each panel uses the deepest configuration for that scale
    (0.6B: $r{=}32$, $L{=}141$;
     8B: $r{=}128$, $L{=}143$;
     32B: $r{=}256$, $L{=}149$).}
  \label{fig:mlp_ratio_appendix}
\end{figure}

\begin{table}[h]
  \caption{%
    Complete overhead ratio (\arch{}\,/\,SwiGLU) for all rank/depth
    configurations at four representative batch sizes, with
    \texttt{reduce-overhead} compilation on an H100.
    Column headers list the depth $L$ for each scale
    (0.6B\,/\,8B\,/\,32B).
    Configurations with ratio $< 1$ indicate that the \arch{} is
    faster than SwiGLU due to smaller working set and better cache
    behavior.}
  \label{tab:rd_full}
  \centering
  \small
  \setlength{\tabcolsep}{3pt}
  \begin{tabular}{@{}ll*{7}{w{c}{1.6cm}}@{}}
    \toprule
    & & \multicolumn{7}{c}{Depth $L$ (0.6B\,/\,8B\,/\,32B)} \\
    \cmidrule(l){3-9}
    $n$ & Scale & 141/143/149 & 71/71/74 & 35/35/37 & 17/17/18 & 8/8/9 & 4/4/4 & 2/2/2 \\
    \midrule
    \multirow{3}{*}{1}
      & 0.6B & $11.0\times$ & $5.9\times$ & $3.3\times$ & $2.0\times$ & $1.3\times$ & $1.1\times$ & $1.0\times$ \\
      & 8B   & $11.2\times$ & $5.9\times$ & $3.3\times$ & $2.1\times$ & $1.4\times$ & $1.2\times$ & $1.0\times$ \\
      & 32B  & $7.1\times$  & $3.9\times$ & $2.3\times$ & $1.6\times$ & $1.3\times$ & $1.0\times$ & $0.9\times$ \\
    \midrule
    \multirow{3}{*}{$2^{12}$}
      & 0.6B & $12.4\times$ & $6.5\times$ & $3.9\times$ & $2.4\times$ & $1.5\times$ & $1.4\times$ & $1.3\times$ \\
      & 8B   & $6.4\times$  & $3.7\times$ & $2.1\times$ & $1.6\times$ & $1.1\times$ & $1.1\times$ & $1.0\times$ \\
      & 32B  & $3.5\times$  & $2.1\times$ & $1.5\times$ & $1.2\times$ & $1.1\times$ & $0.9\times$ & $0.9\times$ \\
    \midrule
    \multirow{3}{*}{$2^{16}$}
      & 0.6B & $17.6\times$ & $8.9\times$ & $4.9\times$ & $2.9\times$ & $1.7\times$ & $1.5\times$ & $1.4\times$ \\
      & 8B   & $5.4\times$  & $2.9\times$ & $1.9\times$ & $1.5\times$ & $1.1\times$ & $1.0\times$ & $1.0\times$ \\
      & 32B  & $3.0\times$  & $2.0\times$ & $1.5\times$ & $1.2\times$ & $1.1\times$ & $0.9\times$ & $0.9\times$ \\
    \midrule
    \multirow{3}{*}{$2^{18}$}
      & 0.6B & $17.5\times$ & $8.9\times$ & $4.9\times$ & $2.8\times$ & $1.6\times$ & $1.5\times$ & $1.4\times$ \\
      & 8B   & $5.1\times$  & $2.8\times$ & $1.8\times$ & $1.4\times$ & $1.1\times$ & $1.0\times$ & $0.9\times$ \\
      & 32B  & $3.0\times$  & $2.0\times$ & $1.5\times$ & $1.2\times$ & $1.1\times$ & $0.9\times$ & $0.9\times$ \\
    \bottomrule
  \end{tabular}
\end{table}

\label{app:full_model_latency}

The results above measure a single MLP in isolation.
In a full transformer, attention, embeddings, and layer normalization are
identical between the baseline and \arch{}, and attention increasingly
dominates at longer sequences, diluting the per-MLP overhead.
Tables~\ref{tab:full_prefill_06b}--\ref{tab:full_prefill_8b} show the
full-model prefill overhead ratio for Qwen3-0.6B and 8B.
The ratio decreases with sequence length: at $\text{seq}{=}4096$
the 8B overhead is $2.9\times$ (vs.\ $4.6\times$ at $\text{seq}{=}1024$),
and at $\text{seq} \ge 32768$ both models approach $1.2$--$1.4\times$.

\begin{table}[h]
  \caption{%
    Full-model prefill overhead ratio (\arch{}\,/\,baseline) for Qwen3-0.6B
    with \texttt{reduce-overhead} compilation on a single H100.
    Rows: batch size. Columns: sequence length.
    ``--'' denotes configurations that exceeded GPU memory.}
  \label{tab:full_prefill_06b}
  \centering
  \small
  \setlength{\tabcolsep}{4pt}
  \begin{tabular}{@{}r*{10}{w{c}{1.0cm}}@{}}
    \toprule
    & \multicolumn{10}{c}{Sequence length} \\
    \cmidrule(l){2-11}
    $B$ & 128 & 256 & 512 & 1024 & 2048 & 4096 & 8192 & 16384 & 32768 & 65536 \\
    \midrule
    1     & $18.7$ & $16.7$ & $13.5$ & $7.8$  & $5.0$  & $2.8$  & $2.0$  & $1.6$  & $1.3$  & $1.1$ \\
    2     & $16.4$ & $13.9$ & $9.4$  & $6.1$  & $3.6$  & $2.7$  & $2.0$  & $1.5$  & $1.3$  & -- \\
    4     & $14.4$ & $10.3$ & $7.1$  & $4.5$  & $3.5$  & $2.6$  & $1.9$  & $1.5$  & --     & -- \\
    8     & $11.0$ & $7.9$  & $5.5$  & $4.5$  & $3.6$  & $2.6$  & $1.9$  & --     & --     & -- \\
    16    & $8.5$  & $5.9$  & $5.3$  & $4.5$  & $3.4$  & $2.5$  & $1.8$  & --     & --     & -- \\
    32    & $6.4$  & $5.7$  & $5.3$  & $4.4$  & $3.3$  & $2.5$  & --     & --     & --     & -- \\
    64    & $6.1$  & $5.8$  & $5.2$  & $4.2$  & $3.3$  & --     & --     & --     & --     & -- \\
    128   & $6.1$  & $5.7$  & $4.9$  & $4.2$  & --     & --     & --     & --     & --     & -- \\
    256   & $6.0$  & $5.4$  & $4.8$  & --     & --     & --     & --     & --     & --     & -- \\
    512   & $5.7$  & $5.3$  & --     & --     & --     & --     & --     & --     & --     & -- \\
    \bottomrule
  \end{tabular}
\end{table}

\begin{table}[h]
  \caption{%
    Full-model prefill overhead ratio (\arch{}\,/\,baseline) for Qwen3-8B
    with \texttt{reduce-overhead} compilation on a single H100.}
  \label{tab:full_prefill_8b}
  \centering
  \small
  \setlength{\tabcolsep}{4pt}
  \begin{tabular}{@{}r*{10}{w{c}{1.0cm}}@{}}
    \toprule
    & \multicolumn{10}{c}{Sequence length} \\
    \cmidrule(l){2-11}
    $B$ & 128 & 256 & 512 & 1024 & 2048 & 4096 & 8192 & 16384 & 32768 & 65536 \\
    \midrule
    1     & $10.3$ & $8.9$  & $6.5$  & $4.6$  & $3.5$  & $2.9$  & $2.3$  & $1.7$  & $1.4$  & $1.2$ \\
    2     & $9.1$  & $6.6$  & $4.9$  & $3.9$  & $3.4$  & $2.8$  & $2.2$  & $1.7$  & $1.4$  & -- \\
    4     & $6.9$  & $5.1$  & $4.1$  & $3.8$  & $3.3$  & $2.7$  & $2.1$  & --     & --     & -- \\
    8     & $5.3$  & $4.3$  & $4.0$  & $3.6$  & $3.1$  & $2.6$  & --     & --     & --     & -- \\
    16    & $4.3$  & $4.1$  & $3.8$  & $3.4$  & $3.0$  & --     & --     & --     & --     & -- \\
    32    & $4.2$  & $3.9$  & $3.7$  & $3.4$  & --     & --     & --     & --     & --     & -- \\
    64    & $4.0$  & $3.8$  & $3.6$  & --     & --     & --     & --     & --     & --     & -- \\
    128   & $3.9$  & $3.7$  & $3.6$  & --     & --     & --     & --     & --     & --     & -- \\
    256   & $3.8$  & $3.7$  & --     & --     & --     & --     & --     & --     & --     & -- \\
    512   & $3.7$  & --     & --     & --     & --     & --     & --     & --     & --     & -- \\
    \bottomrule
  \end{tabular}
\end{table}

\applefootnote{ \textcolor{textgray}{\sffamily Apple and the Apple logo are trademarks of Apple Inc., registered in the U.S. and other countries and regions.}}

\end{document}

%% file: apple_preamble.tex
\usepackage{amsmath}
\usepackage{enumerate}
\usepackage{amsfonts}
\usepackage{amsthm}
\usepackage{diagbox}
\usepackage{colortbl}
\usepackage{amssymb}
\usepackage{xspace}
\usepackage{wrapfig}
\usepackage{adjustbox}
\usepackage{tabularx}
\usepackage{mathtools}
\usepackage{tikz}
\usepackage{enumitem}
\usepackage{silence}
\usepackage{dsfont}
\usepackage{makecell}
\usepackage{xfakebold}
\input{math_commands}

\definecolor{textgray}{HTML}{6E6E73}
\usetikzlibrary{positioning, calc}
\usetikzlibrary{decorations.pathmorphing}

\makeatletter
\patchcmd{\wrong@fontshape}{\@gobbletwo}{}{}{}
\makeatother
\numberwithin{equation}{section}
\makeatletter
\AtBeginDocument{
  \urlstyle{sf}
  
}
\makeatother

\definecolor{light}{RGB}{125, 125, 125}
\crefname{tcb@cnt@pbox}{code}{code}
\Crefname{tcb@cnt@pbox}{Code}{Code}
\crefname{assumption}{assumption}{assumption}
\Crefname{assumption}{Assumption}{Assumptions}

\newtcolorbox[auto counter]{pbox}[2][]{
  colback=white,
  title=Code~\thetcbcounter: #2,
  #1,fonttitle=\sffamily,
  fontupper=\sffamily,
  arc=2pt,
  colframe=bgcolor,
  coltitle=fgcolor,
  colbacktitle=bgcolor,
  toptitle=0.25cm,
  bottomtitle=0.125cm
}

\makeatletter
\newcommand\applefootnote[1]{%
  \begingroup
  \renewcommand\thefootnote{}%
  \renewcommand\@makefntext[1]{\noindent##1}%
  \footnote{#1}%
  \addtocounter{footnote}{-1}%
  \endgroup
}
\makeatother

\definecolor{cverbbg}{gray}{0.90}

%% file: math_commands.tex
\usepackage{amsmath,amsfonts,bm}

\def\eqref#1{equation~\ref{#1}}

\def\1{\bm{1}}

\def\vh{{\bm{h}}}

\def\vv{{\bm{v}}}

\def\vx{{\bm{x}}}

\def\vz{{\bm{z}}}

\def\mA{{\bm{A}}}

\def\mH{{\bm{H}}}
\def\mI{{\bm{I}}}

\def\mM{{\bm{M}}}

\def\mU{{\bm{U}}}
\def\mV{{\bm{V}}}
\def\mW{{\bm{W}}}
\def\mX{{\bm{X}}}

\def\mZ{{\bm{Z}}}

\DeclareMathAlphabet{\mathsfit}{\encodingdefault}{\sfdefault}{m}{sl}
\SetMathAlphabet{\mathsfit}{bold}{\encodingdefault}{\sfdefault}{bx}{n}

\def\gA{{\mathcal{A}}}

\def\gL{{\mathcal{L}}}
\def\gM{{\mathcal{M}}}

\def\gT{{\mathcal{T}}}

\newcommand{\R}{\mathbb{R}}